\documentclass{article}

\usepackage{times}  
\usepackage{helvet}  
\usepackage{courier}  
\usepackage{url}  
\usepackage{graphicx}  
\usepackage{mathtools}
\usepackage{amsmath,amssymb,amsfonts}

\usepackage{caption}
\usepackage{amsthm,dsfont}
\newcommand{\ignore}[1]{}
\usepackage{color}
\usepackage{authblk}
\usepackage{algorithm}
\usepackage{algorithmic}
\usepackage{graphicx}
\usepackage{float} 
\usepackage{multirow}
\usepackage{amsfonts}
\usepackage{comment}
\usepackage{fancyhdr}
\usepackage{rotating}
\usepackage{graphicx}
\usepackage{latexsym}
\usepackage{amsmath}
\usepackage{multirow}
\usepackage{lineno,hyperref}
\usepackage{amsfonts}
\usepackage{comment}
\usepackage{booktabs}
\usepackage{fancyhdr}
\usepackage{algorithm}
\usepackage{algorithmic}
\usepackage{graphicx}
\usepackage{float} 
\usepackage{algorithm}
\usepackage{amsthm}
\usepackage{colortbl}
\newtheorem{theorem}             {Theorem}

\newtheorem{Definition}[theorem]{Definition}
\usepackage{xcolor}
\usepackage{pgfplots}
\usepackage{tikz}
\usepackage{colortbl,soul,xcolor}
\definecolor{bblue}{HTML}{4F81BD}
\definecolor{rred}{HTML}{C0504D}
\definecolor{ggreen}{HTML}{9BBB59}
\definecolor{ppurple}{HTML}{9F4C7C}

\begin{document}
	
	\title{Evolutionary Algorithms for the Chance-Constrained Knapsack Problem}

\author{Yue Xie}
\author{Hirad Assimi}
\author{Aneta Neumann}
\author{Frank Neumann}
\affil{Optimisation and Logistics, School of Computer Science,\\ The University of Adelaide, Adelaide, Australia}
\renewcommand\Authands{ and }
\maketitle

\begin{abstract}
Evolutionary algorithms have been applied to a wide range of stochastic problems. Motivated by real-world problems where constraint violations have disruptive effects, this paper considers the chance-constrained knapsack problem (CCKP) which is a variance of the binary knapsack problem. The problem aims to maximize the profit of selected items under a constraint that the knapsack capacity bound is violated with a small probability. To tackle the chance constraint, we introduce how to construct surrogate functions by applying well-known deviation inequalities such as Chebyshev's inequality and Chernoff bounds. Furthermore, we investigate the performance of several deterministic approaches and introduce a single- and multi-objective evolutionary algorithm to solve the CCKP. In the experiment section, we evaluate and compare the deterministic approaches and evolutionary algorithms on a wide range of instances. Our experimental results show that a multi-objective evolutionary algorithm outperforms its single-objective formulation for all instances and performance better than deterministic approaches according to the computation time. Furthermore, our investigation points out in which circumstances to favour Chebyshev's inequality or the Chernoff bound when dealing with the CCKP. 

\end{abstract}

\section{Introduction}
\label{intro}

Evolutionary algorithms have demonstrated their success in the context of combinatorial optimisation problems. Moreover, evolutionary algorithms have been used for various stochastic problems such as the stochastic job shop problem \cite{HORNG}, the stochastic chemical batch scheduling problem \cite{TILL}, and other dynamic and stochastic problems \cite {Nguyen,Rakshit}. Evolutionary algorithms can obtain good-quality solutions in most cases within a reasonable amount of time in most cases, and can easily apply them to the solution of stochastic problems. However, the mathematical model of the chance-constrained optimisation problem is so complex that it has received little attention in the evolutionary computation literature. In this paper, we develop evolutionary algorithms for the chance-constrained knapsack problem (CCKP), which is a stochastic version of the traditional knapsack problem.

The binary knapsack problem \cite{kellerer} is one of the best-known NP-hard combinatorial optimization problems. The problem is given by a set of $n$ items, each with a weight and a profit, and it objects to maximize the sum of profits of selected items and subjects to the sum of the weight of selected items is less than or equal to the capacity of the knapsack. Generally, in stochastic knapsack problem, the weight and profit of each item are stochastic variables, and the decision of selecting items must be made before any of the random data is realized. Dean et al. \cite{Dean} proved that the stochastic knapsack problem is PSPACE-hard, and there is a significant amount of research on the stochastic knapsack problem in the literature. The general objective is to maximize the expected profit resulting from the assignment of items to the knapsack \cite{Kosuch,Merzifonluoglu}. Due to the difficulty of the stochastic knapsack problem, some researchers investigate approximation results \cite{Bhalgat,Dean,Pike-Burke}. The CCKP studied in this paper assumes the weights of item are stochastic variables conformed to a known continuous probability distribution. The goal of the CCKP is to find a set of items of maximal profit, subject to the condition that the probability with which the total weight will exceeds the capacity bound is less than or equal to a given threshold. Here, the threshold is a small value limiting the probability of the constraint violation. 

Chance-constrained optimization problems~\cite{Charnes,Miller} whose resulting decision ensures the probability of complying with the constraints and the confidence level of being feasible to have received significant attention in the literature. For the general chance-constrained problem, Perkopa et al.~\cite{Prekopa,Andras} proposed a dual-type algorithm, and they investigated the performance of their approaches and compared them with a primal simplex algorithm. Hillier et al.~\cite{Hillier} used linear constraints to generate a procedure for tacking approximate chance constraints. Doerr et al.~\cite{DBLP:journals/corr/abs-1911-11451} investigated submodular optimization problems with chance constraints. They studied the approximation behaviour of greedy algorithms for submodular problems with chance constraints. Chance constraint programming has been widely applied in different disciplines for optimization under uncertainty~\cite{Uryasev}. For example, chance constraint programming has been applied in analogue integrated circuit design~\cite{McConaghy}, mechanical engineering \cite{Mercado}, and other disciplines \cite{liu,poojari}. However, so far, chance constraint programming has received little attention in the evolutionary computation literature~\cite{Zhang}. 

Several research papers that study the stochastic knapsack problem by using chance-constrained programming have been published in the literature. The chance-constrained knapsack problem aims to have a subset of items with maximize profit and remains feasible for the total weight of this set at an acceptable threshold probability~\cite{Goel,Goyal,Kleinberg,klopfenstein}. Kleinberg et al.~\cite{Kleinberg} studied the problem with the weights of items that are only chosen from two possible options. Goel and Indyk \cite{Goel} proposed an algorithm that relaxes the chance constraints by a factor of $(1+\epsilon)$ to solved instances where items have a Poisson distribution or an experimental distribution. Goyal and Ravi \cite{Goyal} investigated the problem where the weights of items conform to the Normal distribution. The proposed linear programming approach can satisfy the chance constraint strictly. Recently, Neumann and Sutton~\cite{fogaruntime} investigated the runtime of the (1+1)~EA for the CCKP and proved that even in the most simple case, it is possible to have local optimal in the search space. Assimi et al.~\cite{DBLP:journals/corr/abs-2002-06766} studied the dynamic chance-constrained knapsack problem and proposed a second objective function to deal with the dynamic capacity of the knapsack. Xie et al.~\cite{xie2020specific} investigated the performance of evolutionary algorithms combined with a heavy-tail mutation operator and a problem-specific crossover operator. They proposed a new multi-objective model for the chance-constrained knapsack problem, which leads to significant performance improvements of multi-objective evolutionary algorithms.

In this paper, for the CCKP, our objective is to find a maximum-value set of items such that the probability of the stochastic weights exceeding the capacity bound is at most $\alpha$. To evaluate a solution concerning the chance constraint, we make use of two inequalities – a Chebyshev’s inequality and a Chernoff bound – to calculate an upper bound on the probability of violating the capacity bound and as surrogate functions for the chance constraint. The probabilistic tools we employ allow us to estimate the probability of a constraint violation mathematically without the need for sampling. To find out conditions when to use either of these two inequalities, we carry out an investigation that shows the expected weights and the variances of a given instance affect the selection between the Chebyshev's inequality and the Chernoff bound. 

We first introduce deterministic approaches named Integer Linear programming (ILP), heuristic approach, and dynamic programming (DP) approach to solve CCKP. We then develop a single-objective approach and a multi-objective approach in terms of solving the CCKP. We consider a simple single-objective evolutionary algorithm named (1+1)~EA and its multi-objective version, GSEMO, both algorithms have been investigated in various studies \cite{Giel,Neumann1,Wegener,Shi} previously. One of the main contributions of our work is to estimate the probability that the total weight of a given solution exceeds the considered weight bound. Such estimation is crucial to determine whether a given solution meets the chance constraint and essential to guide the search of evolutionary computing techniques. 

In experimental investigations, we analyze the results obtained by the proposed approaches dependent on a wide range of knapsack instances associated with confidence level and the uncertainty of the weights. We evaluate the presented deterministic approaches experimentally and compare their performance to the evolutionary algorithms. The considered ILP approach is not able to obtain high-quality solutions in a short time, while the dynamic programming approach requires significantly more computation time than the evolutionary algorithms in all size instances, and the heuristic approach is not being able to obtained solutions in several hours when solving large-scale instances. The proposed multi-objective evolutionary approach can obtain solutions that have a similar quality to the heuristic approach and the DP approach but within a remarkably shorter computation time. The comparison conclusion provides a reasonable justification for using evolutionary optimization when dealing with the chance-constrained knapsack problem.

Moreover, we compare the results obtained by applying different probability inequalities in the deterministic approaches and EAs. The experimental results show that if the threshold of the probability is small, then the performance of the approaches using the Chernoff bound is better than that of Chebyshev's inequality for both single-objective and multi-objective algorithms. Moreover, the experimental results show that the performance of GSEMO is significantly better than (1+1)~EA for all instances. 

This paper is an extension of a conference paper published at GECCO 2019~\cite{Yue}. The conference paper only studies the performance of a single-objective evolutionary algorithm and a multi-objective evolutionary algorithm on the chance-constrained knapsack problem. 
The extension consists of deterministic approaches presented in Section~\ref{sec:deter}.

The remainder of this paper is organized as follows. We introduce the formulation of the chance-constrained knapsack problem and the surrogate functions of the chance constraint in Section \ref{sec:formulation}. In Section \ref{sec:deter} and Section \ref{sec:EAs}, we present the deterministic approaches and the evolutionary algorithms to solve the problem, respectively. Computational experiments and the investigation of the results are described in Section \ref{sec:experimental}, followed by a conclusion in Section \ref{sec:conclusion}.

\section{Problem formulation and surrogate functions of CCKP}
\label{sec:formulation}

In this section, we present the definition of the chance-constrained knapsack problem and the surrogate functions to replace the chance constraint. The surrogate functions are generated by using best-known probability tails Chebyshev's inequality and Chernoff bounds to deal with the chance constraint. We also show the comparisons between these two estimated methods.

\subsection{Problem Definition}

We assume the weights of items are independent of each other, with each weight $w_i$ corresponding to expected value $a_i$ and standard variance $\sigma_i$, and a knapsack capacity $C >0$. We encode a solution as a vector of 0-1 decision variables $X=\{x_1,...,x_n\}$, where $x_i=1$ selects the $i$-th item. Let $W(X)=\sum_{i=1}^n {w_i x_i}$ be the total weight of a given solution $X=\{x_i,...,x_n\}$, with $E[W(X)]=\sum_{i=1}^n {a_i x_i}$ denoting the expected weight of the solution derived by linearity of expectation. Furthermore,  ${Var}(X)=\sum_{i=1}^n {\sigma_i^2 x_i}$ denotes the variance of the weight under the assumption that the variables of items are independent. The goal of the problem is to find a sub-set of items with maximized profit, and the probability of violating the capacity bound is less than or equal to a given threshold denoted as $\alpha$. The CCKP can be formulated as follows:

\begin{align}
	\mathbf{max} \quad & P(X)=\sum_{i \in N} p_ix_i\\
	\mathbf{s.t.}\quad &P_r\left(\sum_{i \in N} w_i x_i \geq C\right)\leq \alpha
	\label{chance} \\
	 & x_i\in \{0,1\}^n
\end{align}

We then present the surrogate functions of the chance constraint \eqref{chance}, which are generated by using Chebyshev's inequality and Chernoff bounds \cite{Raghavan} to deal with the chance constraint.

\subsection{Surrogate Functions}

In this section, we use Chebyshev's inequality and Chernoff bounds \cite{Doerr} to construct the available surrogate that translates to a guarantee on the feasibility of the chance-constrained imposed by Equation \eqref{chance}.

\subsubsection{Chebyshev's inequality}
\label{Chebyshev's inequality}

Firstly, we use the Chebyshev's inequality \eqref{theo:cheb} to reformulate the chance constraint \eqref{chance}. The inequality has great utility for being applied to any probability distribution with known expectation and variance. The Chebyshev's inequality considers tails for upper bound and lower bounds. Since this paper only considers the violation of the capacity bound $C$, we use a one-side Chebyshev's inequality which is known as the Chebyshev-Cantelli inequality. To simplify the presentation, we still use the term Chebyshev's inequality in the following.

\begin{theorem}[Chebyshev's inequality]
\label{thm:cheb}
  Let $X$ be a random variable with expectation $E(X)$ and variance $Var(X)$. Then for any $k\in \mathbb{R}^+$, 
  \begin{equation}
    P_r(X\geq E(X) +k)\leq \frac{Var(X)}{Var(X)+k^2}.
    \label{theo:cheb}
  \end{equation}
\end{theorem}

To match the expression of the Chebyshev's inequality, we set $E[W(X)] +k =C$, then we have a general formula to calculate the upper bound of the chance constraint as follows:

\begin{equation}
\begin{split}
P_r\left(W(X)\geq C\right)\leq \frac{{Var}(X)}{{Var}(X)+\left(C- E[W(X)]\right)^2}.
\end{split}
\label{org:chebyshev}
\end{equation}

Hence, the constraint \eqref{chance} can be reformulated as follows:
\begin{equation}
\begin{split}
\frac{{Var}(X)}{{Var}(X)+\left(C- E[W(X)]\right)^2} \leq \alpha.
\end{split}
\label{chebyshev}
\end{equation}

\begin{Definition}
Given a solution $X$ with stochastic weight $W(X)$, we call the $E[W(X)]+\sqrt{Var(X)\frac{1-\alpha}{\alpha}}$ the surrogate weight of $X$, denoted by $S(X)$.
\end{Definition}

\begin{theorem}
\label{the:Cheb surrogate weight}
If $X$ is a solution vector with surrogate weight $S(X)=E[W(X)]+\sqrt{Var(X)\frac{1-\alpha}{\alpha}}$, then the chance constraint stated in Equation \eqref{chance} is satisfied when $S(X)\leq B$ according to Chebyshev's inequality, for all $\alpha\in (0,1)$.
\end{theorem}

Furthermore, we consider two special cases where each item in the first case has a uniform distribution and takes value in $[a_i -\delta, a_i + \delta]$, which is named the additive uniform distribution. In the second case, each item takes value in $[(1-\beta)a_i, (1+\beta)a_i]$, having a uniform distribution called the multiplicative uniform distribution. Here, $\delta$ and $\beta$ define the uncertainty of the weights of items. For the variable which has a uniform distribution on the interval $[g,h]$, the expectation of this variable is $\mu=\frac{g+h}{2}$ and the variance is $\sigma^2 =\frac{(h-g)^2}{12}$. 
Applying Chebyshev's inequality to the chance constraint, we require

\begin{equation}
  P_r\left( W(X)\geq C\right)\leq\frac{\delta^2\sum_{i \in N} x_i}{\delta^2\sum_{i \in N} x_i+3(C- \sum_{i \in N} a_i x_i)^2} \leq \alpha
  \label{cheby: uniform delta}
\end{equation}
for the additive uniform distribution and

\begin{equation}
  P_r\left( W(X)\geq C\right)\leq\frac{\beta^2 \sum_{i \in N} {a_i}^2 x_i}{\beta^2 \sum_{i \in N} {a_i}^2 x_i+3(C- \sum_{i \in N} a_i x_i)^2} \leq \alpha
  \label{cheby: uniform beta}
\end{equation}
for the multiplicative uniform distribution.
When each weight $w_i$ is chosen according to the Normal distribution $N(a_i, \sigma^2_i)$ and are independent of each other, we get
\begin{equation}
 P_r\left(W(X) \geq C\right)\leq \frac{\sum_{i \in N} \sigma_i^2 x_i}{\sum_{i \in N} \sigma_i^2 x_i+\left(C- \sum_{i \in N} a_i x_i\right)^2} \leq \alpha
\label{cheb:normal:org}
\end{equation}

\subsubsection{Chernoff bounds}
Chernoff bounds provide sharper tails with exponential decay behavior, those bounds are sharper than other known tail bounds such as Markov inequality and Chebyshev's inequality. Chernoff bounds assume that variables are independent and take on values in $[0,1]$. There are several types of Chernoff bounds, and we use the following one, which can be found in \cite{Doerr}.

\begin{theorem}[Chernoff bound]
\label{thm:cher}
  Let $X_1,...,X_n$ be independent random variables taking values in $[0,1]$. Let $X=\sum_{i \in N} X_i$. Let $\epsilon\geq 0$. Then
  \begin{equation}
   P_r(X\geq(1+\epsilon)E(X)) \leq \left(\frac{e^\epsilon}{(1+\epsilon)^{(1+\epsilon)}}\right)^{E(X)}.
   \label{func:cher}
  \end{equation}
\end{theorem}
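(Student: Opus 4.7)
The plan is to apply the classical exponential moment method (Chernoff's trick), in which we convert a tail bound for a sum into a Markov bound on its moment generating function and then optimize over a free parameter. First, I would fix $t>0$ and use that $e^{tX}$ is non-negative to write
\begin{equation*}
P(X\geq(1+\epsilon)E(X)) = P\bigl(e^{tX}\geq e^{t(1+\epsilon)E(X)}\bigr) \leq \frac{E[e^{tX}]}{e^{t(1+\epsilon)E(X)}}
\end{equation*}
by Markov's inequality. The parameter $t$ is kept free and will be optimized at the end, which is what makes the bound sharp.

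Second, I would exploit independence of the $X_i$ to factor $E[e^{tX}]=\prod_{i=1}^n E[e^{tX_i}]$. For each $i$, since $X_i\in[0,1]$ and $x\mapsto e^{tx}$ is convex, the chord from $(0,1)$ to $(1,e^t)$ dominates the exponential on $[0,1]$, giving $e^{tX_i}\leq 1+X_i(e^t-1)$ pointwise. Taking expectations and then applying the standard inequality $1+z\leq e^z$ yields $E[e^{tX_i}]\leq \exp(E[X_i](e^t-1))$; multiplying these per-coordinate bounds and using linearity of expectation collapses the product into
\begin{equation*}
E[e^{tX}] \leq \exp\bigl(E(X)(e^t-1)\bigr).
\end{equation*}

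Third, combining the two bounds, the resulting exponent is $E(X)(e^t-1)-t(1+\epsilon)E(X)$. Minimizing over $t>0$ by setting the derivative to zero gives $e^t=1+\epsilon$, i.e.\ $t=\ln(1+\epsilon)$, which is non-negative since $\epsilon\geq 0$. Substituting this optimal $t$ collapses the exponent to $\epsilon E(X) - (1+\epsilon)\ln(1+\epsilon)\cdot E(X)$, which exponentiates to exactly $(e^\epsilon/(1+\epsilon)^{1+\epsilon})^{E(X)}$ as claimed; the degenerate case $\epsilon=0$ is handled trivially since the bound becomes $1$.

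The main obstacle is really the per-coordinate moment bound $E[e^{tX_i}]\leq \exp(E[X_i](e^t-1))$, as this is exactly where the hypothesis $X_i\in[0,1]$ is used essentially: the chord argument (and hence the clean tensorization into a single exponential) fails without it. Everything else, namely Markov's inequality, factorization by independence, and the one-dimensional calculus optimization that selects $t=\ln(1+\epsilon)$, is mechanical once this lemma is in place.
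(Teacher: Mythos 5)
Your proof is correct: the Markov/exponential-moment step, the factorization by independence, the chord bound $e^{tX_i}\leq 1+X_i(e^t-1)$ on $[0,1]$ followed by $1+z\leq e^z$, and the optimization $t=\ln(1+\epsilon)$ are all sound and yield exactly the stated bound. The paper does not prove this theorem itself but cites it as Theorem 10.1 of the Doerr reference, whose proof is this same standard argument, so your derivation matches the intended one.
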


Function \eqref{func:cher} can be used in the case that all random variables are independent and have their value in $[0,1]$. The Chernoff bound is used to calculate the probability of violating the constraint incorporate with a surrogate function.

\begin{theorem}
\label{chernoff:variance}
  Let the stochastic weights $w_1, \ldots, w_n$ be independent variables with expected values $a_i, \ldots, a_n$, respectively. Let $C >0$ be the capacity of the knapsack, $\sum_{i \in N} w_i x_i$ denotes the total weight of a solution $X=\{x_1,...,x_n\}$, and $E[W(X)]=\sum_{i \in N} a_i x_i$ be the expected weight of the solution, Furthermore, let $\delta \geq 0$ be the uncertainty of the stochastic weights. Then we have the following function.
  \begin{equation}
  \label{chernoff:fun}
   P_r\left(\sum_{i \in N} w_i x_i\geq C\right)
\leq \left(\frac{e^{\frac{C-E[W(X)]}{\delta \sum_{i \in N} x_i}}}{\left(\frac{\delta \sum_{i \in N} x_i +C-E[W(X)]}{\delta\sum_{i \in N} x_i}\right)^{\frac{\delta \sum_{i \in N} x_i +C-E[W(X)]}{\delta\sum_{i \in N} x_i}}}\right)^{\frac{1}{2}\sum_{i \in N} x_i}
  \end{equation}
\end{theorem}

\begin{proof}
In the chance-constrained knapsack problem, we can apply the Chernoff bound in a unique setting that lets $w_i\in[a_i-\delta, a_i+\delta]$ and conform to a uniform distribution. All random variables have the same uncertainty $\delta$ but different initial and final boundaries. We than normalize the stochastic weights to make them chosen values in $[0,1]$. Therefore, we set
  \begin{displaymath}
     y_i=\frac{w_i-(a_i-\delta)}{2\delta}\in [0,1], \, Y(X)=\sum_{i \in N} {y_ix_i}. 
  \end{displaymath}
   Then we have,
  \begin{displaymath}
    E_W(y_i)=\frac{a_i-(a_i -\delta)}{2\delta}=\frac{1}{2}
  \end{displaymath}
  and 
  \begin{displaymath}
   E_W[Y(X)]=\frac{E[W(X)]-(\sum_{i \in N} a_ix_i-\sum_{i \in N} \delta x_i)}{2\delta}=\frac{1}{2}\sum_{i \in N} x_i.
  \end{displaymath}
  We introduce $Y(X)$ in the Chernoff bound and we have
   \begin{equation}
   \begin{split}
    \left(\frac{e^\epsilon}{(1+\epsilon)^{(1+\epsilon)}}\right)^{E_W[Y(X)]} &\geq P_r\left[Y(X)\geq(1+\epsilon)E_W[Y(X)]\right]\\
&= Pr\left[\sum_{i \in N}\frac{w_i-(a_i-\delta)}{2\delta}x_i \geq (1+\epsilon)\frac{\sum_{i \in N} x_i}{2}\right]\\
&= Pr\left[\sum_{i \in N} w_i x_i -\sum_{i \in N} (a_i-\delta)x_i \geq (1+\epsilon)\delta \sum_{i \in N} x_i\right]\\
&= Pr\left[\sum_{i \in N}w_i x_i\geq \epsilon \delta \sum_{i \in N} x_i +\sum_{i \in N} a_i x_i\right].\nonumber
  \end{split}
  \end{equation}
Now, let $ C=\epsilon \delta \sum_{i \in N} x_i +\sum_{i \in N} a_i x_i$, we have $\epsilon=\frac{C-E[W(X)]}{\delta \sum_{i \in N} x_i}$. We substitute $C$ and $\epsilon$ into the last expression, which completes the proof.
\end{proof}

The proof can be distinguished by the following two characteristics. On the one hand, we study the random variable $Y$ rather than $W$, on the other hand, the interval lengths we discussed are the same for all stochastic weights. We reformulate the chance constraint by using the Chernoff bound to estimate the upper bound of the probability of violating the capacity. It should be noted that the interval of all weights should be the equivalent. The surrogate function of the chance constraint is as follows:

\begin{equation}
\label{chernoff:con}
\resizebox{.9\hsize}{!}{
$P_r\left(W(X)\geq C \right) \leq \left(\frac{e^{\frac{C-\sum_{i \in N} a_i x_i}{\delta \sum_{i \in N} x_i}}}{\left(\frac{\delta \sum_{i \in N} x_i +C-\sum_{i \in N} a_i x_i}{\delta\sum_{i \in N} x_i}\right)^{\frac{\delta \sum_{i \in N} x_i +C-\sum_{i \in N} a_i x_i}{\delta\sum_{i \in N} x_i}}}\right)^{\frac{1}{2}\sum_{i \in N} x_i} \leq \alpha$}
\end{equation}

\begin{theorem}
\label{the:Chernoff surrogate weight}
If $X$ is a solution vector with surrogate weight 
$$S'(X)= E[W(X)]+\sqrt{-\frac{2}{\sum_{in \in N}x_i}\ln{\alpha}} \delta \sqrt{\sum_{in \in N}x_i},$$ then the chance constraint stated in Equation \eqref{chance} is satisfied when $S'(X)\leq B$ according to Chernoff bound, and let $w_i\in [a_i-\delta, a_i+\delta]$, for all $\alpha\in (0,1)$.
\end{theorem}

\begin{proof}
Let $\epsilon = \frac{C-E[W(X)]}{\delta \sum_{i \in N}x_i}\geq 0$, by Theorem \ref{chernoff:variance}, this is bounded above by

\begin{align*}
&\left(\frac{e^\epsilon}{(1+\epsilon)^{(1+\epsilon)}}\right)^{\frac{\sum_{i \in N}x_i}{2}}\leq \alpha\\
\Longleftrightarrow \;
& \frac{e^\epsilon}{(1+\epsilon)^{(1+\epsilon)}} \leq (\alpha)^{\frac{2}{\sum_{i \in N}x_i}} \\
 \Longleftrightarrow \;& 
 e^\epsilon \leq (1+\epsilon)^{(1+\epsilon)}(\alpha)^{\frac{2}{\sum_{i \in N}x_i}}\\
\Longleftrightarrow \;&  
\epsilon \leq \frac{2}{\sum_{i \in N}x_i} \ln{\alpha} + (1+\epsilon) \ln{(1+\epsilon)} \\
\Longleftrightarrow \;& 
\epsilon-(1+\epsilon)\ln{(1+\epsilon)} \leq \frac{2}{\sum_{i \in N}x_i} \ln{\alpha} \\
\Longrightarrow \;& 
\epsilon -(1+\epsilon)\epsilon \leq  \frac{2}{\sum_{i \in N}x_i} \ln{\alpha} \\
\Longleftrightarrow \;& 
-\epsilon^2 \leq  \frac{2}{\sum_{i \in N}x_i} \ln{\alpha} \\
\Longleftrightarrow \;& 
\epsilon^2 \geq  -\frac{2}{\sum_{i \in N}x_i} \ln{\alpha} \\
\Longleftrightarrow \;& 
\epsilon \geq \sqrt{-\frac{2}{\sum_{i \in N}x_i} \ln{\alpha}}. \nonumber
\end{align*}

Hence, we have $\frac{C-E[W(X)]}{\delta \sum_{i \in N}x_i }\geq \sqrt{-\frac{2}{\sum_{i \in N}x_i} \ln{\alpha}}$, and  $C \geq E[W(X)]+\sqrt{-\frac{2}{\sum_{i \in N}x_i}\ln{\alpha}}\delta \sqrt{\sum_{i \in N}x_i}$ where we have used the claimed surrogate weight.
\end{proof}

\subsubsection{Comparison of tail inequalities}

Next, we theoretically investigate the effectiveness of Chebyshev's inequality and the Chernoff bound on tacking the chance constraint. The goal of this analysis is to examine which estimation method outperforms the other under various conditions. Let $p_{\mathtt{Cher}}(X)$ denotes the probability obtained by the Chernoff bound and $p_{\mathtt{Cheb}}(X)$ be the probability calculated by the Chebyshev's inequality. The following theorem states a condition under which one inequality should be preferred over the other.

\begin{theorem}
Let $X$ be a solution with the expected weight $E[W(X)]$ and the variance of weight ${Var}(X)$, let $\epsilon = \frac{C-E[W(X)]}{\delta\sum_{i \in N} x_i}$. We have $p_{\mathtt{Cher}}(X) \leq p_{\mathtt{Cheb}}(X)$ if and only if
\begin{equation}
  \frac{\left(\frac{e^\epsilon}{(1+\epsilon)^{1+\epsilon}}\right)^{E[W(X)]}\left(\epsilon E[W(X)]\right)^2}{1-\left(\frac{e^\epsilon}{(1+\epsilon)^{1+\epsilon}}\right)^{E[W(X)]}} \leq {Var}(X).
  \end{equation}
 \label{theo:compare}
\end{theorem}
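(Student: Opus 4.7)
The plan is purely algebraic: unpack the two bounds in closed form, identify the auxiliary variable $k=B-E_W(X)$, and show that the claimed inequality is just a rearrangement of $p_{\mathtt{Cher}}(X)\le p_{\mathtt{Cheb}}(X)$.

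First I would make the setup explicit. By Theorem~\ref{chernoff:variance} (or more directly Theorem~\ref{thm:cher}) the Chernoff estimate is obtained by choosing $\epsilon$ so that $(1+\epsilon)E_W(X)=B$, i.e.\ $\epsilon=(B-E_W(X))/E_W(X)$, and hence
\begin{equation*}
p_{\mathtt{Cher}}(X)=\left(\frac{e^{\epsilon}}{(1+\epsilon)^{1+\epsilon}}\right)^{E_W(X)}, \qquad \epsilon E_W(X)=B-E_W(X).
\end{equation*}
From Theorem~\ref{thm:cheb} the Chebyshev estimate is
\begin{equation*}
p_{\mathtt{Cheb}}(X)=\frac{\mathrm{Var}_W(X)}{\mathrm{Var}_W(X)+(B-E_W(X))^2}.
\end{equation*}
To keep the algebra compact I would write $C:=p_{\mathtt{Cher}}(X)$, $V:=\mathrm{Var}_W(X)$, and $k:=\epsilon E_W(X)=B-E_W(X)$, so that $p_{\mathtt{Cheb}}(X)=V/(V+k^2)$.

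Next I would chain equivalences starting from $C\le V/(V+k^2)$. Multiplying both sides by the positive quantity $V+k^2$ gives $C(V+k^2)\le V$, i.e.\ $Ck^2\le V(1-C)$. Since the Chernoff estimate is only informative (and the claimed inequality is only meaningful) when $C<1$, I can divide by $1-C>0$ to obtain the equivalent statement
\begin{equation*}
\frac{Ck^2}{1-C}\;\le\;V,
\end{equation*}
which after substituting back $C=\bigl(e^{\epsilon}/(1+\epsilon)^{1+\epsilon}\bigr)^{E_W(X)}$ and $k^2=(\epsilon E_W(X))^2$ is exactly the inequality in the statement of the theorem. Each step is reversible, which gives the ``if and only if''.

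The only non-routine point is the degenerate case $C\ge 1$. Here I would observe that the Chernoff bound is vacuous (both sides of the original inequality are at most $1$, and the left-hand side of the stated inequality becomes non-positive or undefined), so the inequality $p_{\mathtt{Cher}}(X)\le p_{\mathtt{Cheb}}(X)$ trivially fails unless $p_{\mathtt{Cheb}}(X)=1$ as well; mentioning this caveat is the main expository subtlety, since the rest is a one-line rearrangement.
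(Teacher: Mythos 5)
Your proposal is correct and follows essentially the same route as the paper: set $k=\epsilon E_W(X)$ in the Chebyshev--Cantelli bound, then chain the reversible algebraic rearrangements $C\le V/(V+k^2)\Leftrightarrow Ck^2\le V(1-C)\Leftrightarrow Ck^2/(1-C)\le V$. Your explicit remark that dividing by $1-C$ requires $C<1$ is a small but welcome addition that the paper leaves implicit.
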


\begin{proof}
Using the variable $\epsilon$ from the Chernoff bound, we set $k=\epsilon E[W(X)]$ in the Chebyshev's inequality.
Then, we have
\begin{align*}
&\left(\frac{e^\epsilon}{(1+\epsilon)^{1+\epsilon}}\right)^{E[W(X)]} \leq  \frac{{Var}(X)}{{Var}(X) +(\epsilon E[W(X)])^2}\\
\Longleftrightarrow \;& \left(\frac{e^\epsilon}{(1+\epsilon)^{1+\epsilon}}\right)^{E[W(X)]} ({Var}(X) +(\epsilon E[W(X)])^2)  \leq {Var}(X)\\
  \Longleftrightarrow \;& \left(\frac{e^\epsilon}{(1+\epsilon)^{1+\epsilon}}\right)^{E[W(X)]} (\epsilon E[W(X)])^2) \leq {Var}(X) \left(1-\left(\frac{e^\epsilon}{(1+\epsilon)^{1+\epsilon}}\right)^{E[W(X)]}\right)\\
\Longleftrightarrow \;& \frac{\left(\frac{e^\epsilon}{(1+\epsilon)^{1+\epsilon}}\right)^{E[W(X)]}(\epsilon E[W(X)])^2}{1-\left(\frac{e^\epsilon}{(1+\epsilon)^{1+\epsilon}}\right)^{E[W(X)]}} \leq   {Var}(X) \nonumber
\end{align*}
which shows our claim.
\end{proof}

\begin{figure}[t]
\centering
\includegraphics[width = 0.6\textwidth]{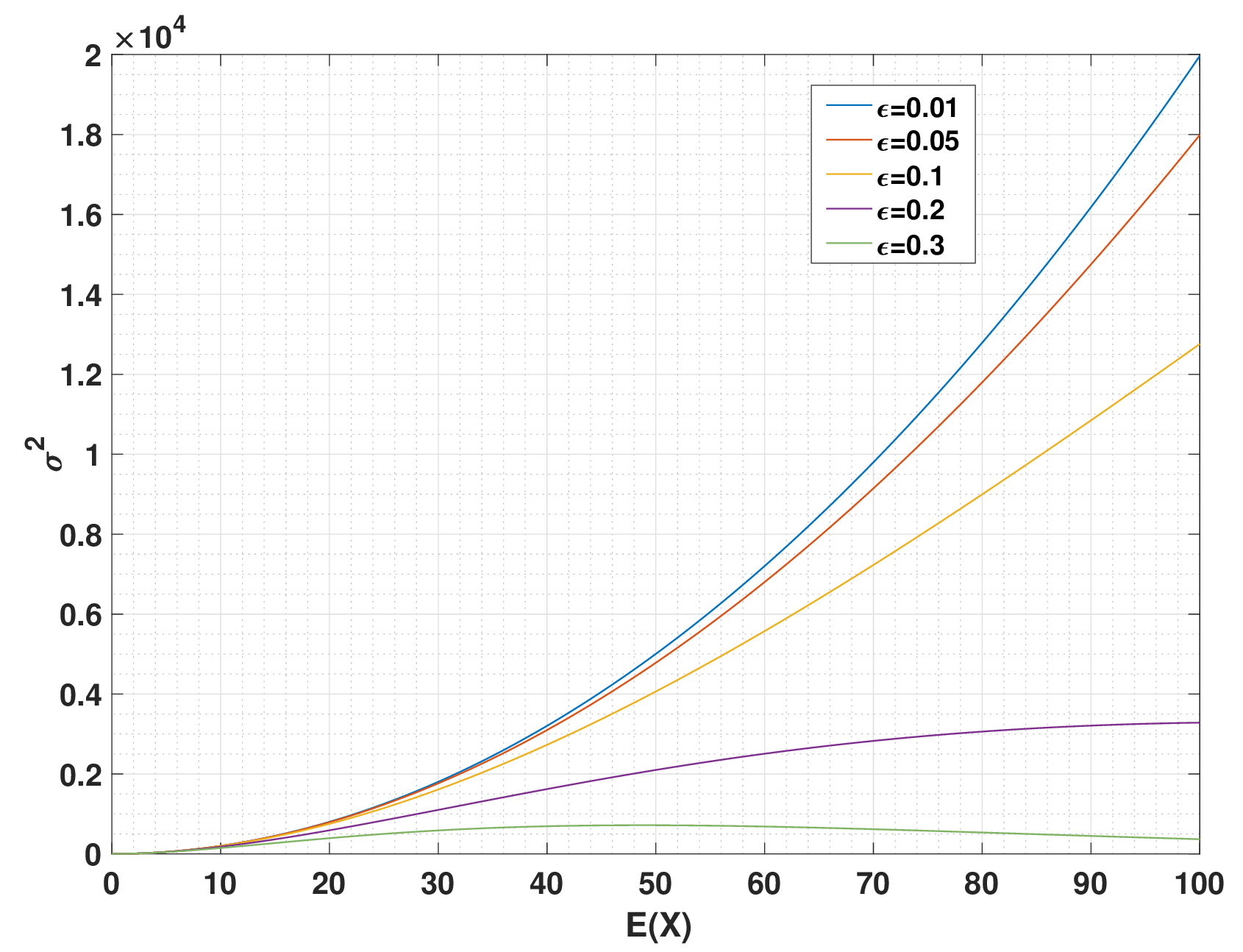}
\caption{The relationship between $E[W(X)]$ and $Var(X)$ based on different values of $\epsilon$.}
\label{fig:comp}
\end{figure}

We now further investigate the relationship between Chebyshev's inequality and the Chernoff bound. In Theorem~\ref{theo:compare}, the three parameters: $\epsilon, E[W(X)]$ and ${Var}(X)$ establish the relationship between Chebyshev's inequality and the Chernoff bound. Among the parameters, $\epsilon$ indicates the deviation from the expected value. After fixing the value of $\epsilon$, for any instance, the relationship between $E[W(X)]$ and ${Var}(X)$ can determine which inequality is more suitable for the purpose of solving the instance. As shown in Figure \ref{fig:comp}, the values of $\epsilon$ set independently at $\{0.01,0.05,0.1,0.2,0.3\}$. The figure is based on a test instance with $100$ items, and the weights of items are chosen uniformly at random in the interval $[0,1]$. Every curve in Figure~\ref{fig:comp} corresponds to a fixed value of $\epsilon$. When the tuple of $\left(E[W(X)], {Var}(X)\right)$  is located on the curve, the probability of the constraint violation calculated by Chebyshev's inequality and the Chernoff bound are the same. For the situation where a tuple of $\left(E[W(X)], {Var}(X)\right)$ is located above the curve, the Chernoff bound gives a better estimate, and if it is located below the curve, Chebyshev's inequality provides a better upper bound on the probability of the constraint violation. As can be seen from the figure, the greater the value of $\epsilon$, the more suitable the Chernoff bound is for obtaining a superior bound.

\section{Deterministic Approaches}
\label{sec:deter}

In this section, we consider several deterministic approaches to solving the chance-constrained knapsack problem. We present Integer Linear Programming (ILP), a heuristic approach based on Nemhauser Ullmann aalgorithm (NU-base) and dynamic programming (DP) in the following subsections.

\subsection{Integer Linear Programming }

Firstly, we linearise the surrogate functions which replace the chance constraint in the CCKP, then we apply Integer Linear Programming (ILP) to solve the problem. We consider Chebyshev's inequality and provide the linear approximation that characterises the ILP approach. The surrogate functions in Section \ref{Chebyshev's inequality} can be replaced by the following equations:

\begin{align}
  & \frac{\delta^2\sum_{i \in N} x_i}{\delta^2\sum_{i \in N} x_i+3(C- \sum_{i \in N} a_i x_i)^2}\leq \alpha \nonumber \\
  \Longleftrightarrow
  & \delta^2 \sum_{i \in N} x_i \leq \alpha \delta^2 \sum_{i \in N} x_i + 3\alpha \left(C-\sum_{i \in N} a_i x_i\right)^2 \nonumber\\
  \Longleftrightarrow
  & \frac{1-\alpha}{3\alpha}\delta^2 \sum_{i \in N} x_i \leq \left(C-\sum_{i \in N} a_i x_i\right)^2 \nonumber \\
 \Longleftrightarrow
 & \frac{1-\alpha}{3 \alpha} \delta^2\sum_{i \in N} x_i \leq C^2 - 2C\sum_{i \in N} a_ix_i + \left( \sum_{i \in N} a_ix_i\right)^2 \nonumber \\
 \Longleftrightarrow
 & \frac{1-\alpha}{3 \alpha} \delta^2\sum_{i \in N} x_i \leq C^2 - 2C\sum_{i \in N} a_ix_i + \sum_{i \in N} a_i^2 x_i +2 \sum_{i<j: i,j \in N} a_ia_j x_ix_j
 \label{cheby: unifor_delta ILP}
\end{align}
for the additive uniform distribution \eqref{cheby: uniform delta}. Then, we replace the term $x_ix_j$ in the right hand side in Equation \eqref{cheby: unifor_delta ILP} by a new added variable $y_{ij}$ with domain $\{0,1\}$ and define linear constraints for $y_{ij}$ as follows:
\begin{align}
    2y_{ij} \leq x_i +x_j \leq 1+y_{ij}.
    \label{con:addforYij}
\end{align}
The chance constraint can be reformulated as follows:
\begin{align}
    & \frac{1-\alpha}{3 \alpha} \delta^2\sum_{i \in N} x_i \leq C^2 - 2C\sum_{i \in N} a_ix_i + \sum_{i \in N} a_i^2 x_i +2 \sum_{i<j: i,j \in N} a_ia_j y_{ij} \nonumber\\
    \Longleftrightarrow
    &  \frac{1-\alpha}{3 \alpha} \delta^2\sum_{i \in N} x_i  + 2C\sum_{i \in N} a_ix_i - \sum_{i \in N} a_i^2 x_i -2 \sum_{i<j: i,j \in N} a_ia_j y_{ij} \leq C^2.
    \label{con:ILPyij}
\end{align}

For the chance-constrained knapsack problem, any feasible solution should subject to Equation \eqref{cheby: unifor_delta ILP} which indicate regardless of whether the value of $x_ix_j$ is equal to 0 or 1, setting $y_{ij}$ will not make the feasible solution infeasible. For example, if a solution $X$ is a feasible solution, it should submit to Equation \eqref{cheby: unifor_delta ILP}, assume there exist some $x_ix_j=0$ and we set the corresponding $y_{ij}$ to 1. So, we have $\sum_{i<j:i, j \in N}a_i a_j x_i x_j \leq \sum_{i<j:i,j \in N }a_i a_j y_{ij}$ of the solution $X$ which does not make the solution infeasible. Therefore, we can remove the right part of Equation \eqref{con:addforYij}. We then formulate the ILP model of the chance-constrained knapsack problem with weights conforming to the additive uniform distribution as follows:
\begin{align}
	\mathbf{max} \quad &\sum_{i \in N} p_ix_i\\
	\mathbf{s.t.}\quad &   \frac{1-\alpha}{3 \alpha} \delta^2\sum_{i \in N} x_i  + 2C\sum_{i \in N} a_ix_i - \sum_{i \in N} a_i^2 x_i -2 \sum_{i<j: i,j \in N} a_ia_j y_{ij} \leq C^2\\
	&  2y_{ij} \leq x_i +x_j 
	\label{con:ILP.additiveY} \\
	 & x_i, y_{ij}\in \{0,1\}^n
\end{align}
Similarly, for multiplicative uniform distribution \eqref{cheby: uniform beta}, we have
\begin{align}
 & \frac{\beta^2 \sum_{i \in N} {a_i}^2 x_i}{\beta^2 \sum_{i \in N} {a_i}^2 x_i+3(C- \sum_{i \in N} a_i x_i)^2} \leq \alpha \nonumber \\
 \Longleftrightarrow
 & \beta^2 \sum_{i \in N} a_i^2 x_i \leq \alpha \beta^2 \sum_{i \in N} x_i + 3\alpha \left(C-\sum_{i \in N} a_i x_i \right)^2 \nonumber \\
 \Longleftrightarrow
 & \frac{(1-\alpha)}{3\alpha}\beta^2 \sum_{i \in N} a_i^2 x_i \leq \left(C-\sum_{i \in N} a_i x_i \right)^2 \nonumber \\
 \Longleftrightarrow
 & \frac{(1-\alpha)}{3\alpha}\beta^2 \sum_{i \in N} a_i^2 x_i \leq C^2 - 2C\sum_{i=1}^n a_ix_i + \sum_{i \in N} a_i^2 x_i +2 \sum_{i<j: i,j \in N} a_ia_j x_ix_j
  \label{cheby: unifor_beta ILP}
\end{align}
and for the case that each weight of item is chosen according to the Normal distribution $N(a_i,\sigma^2_i)$ \eqref{cheb:normal:org}, we have
\begin{align}
  & \frac{\sum_{i \in N} \sigma_i^2 x_i}{\sum_{i \in N} \sigma_i^2 x_i+\left(C- \sum_{i \in N} a_i x_i\right)^2} \leq \alpha \nonumber\\
  \Longleftrightarrow
 & \sum_{i \in N}\sigma_i^2 x_i \leq \alpha \sum_{i \in N} \sigma_1^2 x_i +\alpha \left(C-\sum_{i \in N} a_i x_i \right)^2 \nonumber \\
  \Longleftrightarrow
  & \frac{(1-\alpha)}{\alpha}\sum_{i \in N} \sigma_i^2 x_i \leq \left(C-\sum_{i \in N} a_i x_i \right)^2 \nonumber \\
   \Longleftrightarrow
  & \frac{(1-\alpha)}{\alpha}\sum_{i \in N} \sigma_i^2 x_i \leq  C^2 - 2C\sum_{i \in N} a_ix_i + \sum_{i \in N} a_i^2 x_i +2 \sum_{i<j: i,j \in N} a_ia_j x_ix_j.
  \label{cheby: normal ILP}
\end{align}
Replacing the terms $x_ix_j$ in Equations (\ref{cheby: unifor_beta ILP}) and (\ref{cheby: normal ILP}) with the added variables $y_{ij}$, we can obtain the ILP models which take into account different surrogate constraints.

\subsection{Heuristic approach}

\begin{algorithm}[h]
\linespread{0.8}\selectfont
\caption{Heuristic approach for the CCKP}

\begin{algorithmic}[1]
\STATE \textbf{Input:} $n$ items with $\{a_1,..., a_n\}$ and $\{v_1,...,v_n\}$ which are the expected weights and the variances of the weights of items, and $\{p_1,...,p_n\}$ which are the profits of the items. Knapsack capacity $C$. 

\STATE let lists $L_1,..,L_n$.
\STATE Initialize $L_0= \{(0,0)\}$.
\FOR{$i = 1 $ to $ n$}
\STATE $L'_{i-1}=L_{i-1}$;
\FOR{each solution $K \in L'_{i-1}$}
\STATE add item $i$ to $K$ to generate a new solution $K'= K \cup \{i\}$,
\IF {solution $K'$ is feasible}{
\STATE replace $K$ with $K'$.
}
\ELSE 
\STATE delete $K$ from $L'_{i-1}$.
\ENDIF
\ENDFOR

\STATE let $P_{max}=-1$, $E=\{\}$
\WHILE{true}{
\STATE Let $k \in L_{i-1}$ be the first one with $p_k >P_{max}$,
\STATE let $k' \in L'_{i-1}$ be the first one with $p_{k'} >P_{max}$,
\IF{can not find $k$}{
\STATE insert remaining points from $L'_{i-1}$ into $E$,
\STATE \textbf{break}.
}
\ENDIF
\IF{can not find $k'$}{
\STATE insert remaining points from $L_{i-1}$ into $E$,
\STATE \textbf{break}.
}
\ENDIF

\IF{$S(k) < S(k')$ or $(S(k)= S(k') \  \text{and} \  p_k >p_{k'})$} {
\STATE insert $k$ into $E$, and set $P_{max}=p_k$,
}
\ELSE {
\STATE  insert $k'$ into $E$, and set $P_{max}=p_{k'}$,
}
\ENDIF
}
\ENDWHILE
\ENDFOR
\RETURN $L_n$.
\end{algorithmic}
\label{alg:NU}
\end{algorithm}

We now introduce a heuristic approach (see Algorithm \ref{alg:NU}) adapted from the Nemhauser-Ullmann algorithm (NU algorithm) proposed by Nemhauser and Ullmann \cite{Nemhauser}. The NU algorithm can be viewed as a sparse dynamic programming approach \cite{Beier04}, and it computers a list of all dominating sets in an iterative manner, adding one item after the other. For $i \in \{1,\ldots, n\}$, let $L_i$ denote the list of Pareto-optimal points with considering item 1 to $i$. The solutions in $L_i$ are assumed to be listed in increasing order of weight (profit). 

In the heuristic approach, we use surrogate weights obtained using Chebyshev's inequality and the Chernoff bound to replace the exact weights of the solution used to find the Pareto front in the NU algorithm. The heuristic approach starts with the empty set and then add items one by one until it finally obtains the set of Pareto-optimal packing for all $n$ items. $L_i$ can be computed using $L_{i-1}$ and the item $m$ as follows: first generate $L'_{i-1}$, add item $i$ to each element in $L_{i-1}$ if the new solution is feasible, and inset it to $L'_{i-1}$. Then, we merge the two lists $L_{i-1}$ and $L'_{i-1}$ according to the surrogate weight and the profit of the solutions. Finally, we obtain an order sequence $L_i$ of dominating solutions over the items $1,\ldots, i$. The resulting list $L_n$ contains all Pareto-optimal points for $n$ items. For this list, we choose the point with maximal profit, and the packing belonging to this result is the approximate optimal solution. 

In the merging step, both lists are sorted according to the surrogate weights. Thus, this task can be completely by scanning only once through both of these lists. However, this heuristic approach is effective in solving the knapsack problem where the weights of items are deterministic in value. In the chance-constrained knapsack problem, the weights of items are stochastic variables. Using the surrogate weights and the profits of solutions to find the Pareto front does not guarantee that the optimal solution to the problem will be found. Therefore, we introduce dynamic programming for the chance-constrained knapsack problem in the next subsection.

\subsection{Dynamic programming}

We now introduce a dynamic programming approach to solve the chance-constrained knapsack problem. Dynamic programming is one of the traditional approaches for the classical binary knapsack problems \cite{Toth}. In the DP approach, items are processed in the order according to their index, from $1$ to $n$.

The key idea behind the DP approach is to assume the weights of items are random variables with corresponding expected weights and variances. The approach is applied in a similar manner to the process which is undertaken for the classical binary knapsack problem. The program table of the chance-constrained knapsack problem consists of $n+1$ rows and $C+1$ columns which are used to compute the optimal solution. Each cell $M_{ij}$ consists of a set of feasible solutions which selected items from the items set $\{1,\ldots,i\}$ and knapsack capacity $j$, and those solutions are not dominated by each other. Here we choose to use the surrogate function according to the setting of the problem to tackle the chance constraint. To initialise the program table, we set the first cell as $M_{00} = \{\emptyset\}$ which only contains an empty set of items.

It can be observed of the surrogate functions obtained by Chebyshev's inequality \eqref{chebyshev} and Chernoff bound \eqref{chernoff:con} that for a solution, not only its expected weight but also its variance affect the probability with which it will violate the knapsack bound. With a fixed expected weight, when the value of the variance decrease, the probability of the chance constraint will decrease. Similarly, with a fixed variance, a decrease in the expected weight leads to a decrease in probability that the chance constraint will decrease. The smaller the value of this probability, the higher the capability to insert items into the knapsack. Therefore, we say that solution $S$ dominates solution $S'$, denoted by $S \succeq S'$, iff $w(S)<w(S'), v(S)<v(S')$ and $p(S)>p(S')$, where $w(S)$ denotes the expected weight of solution $S$, $v(S)$ denotes the variance of solution $S$ and $p(S)$ denotes the profit of solution $S$.

Let item $(i-1)$ be the predecessor of item $i$ and $a_i \leq j$. Based on the set of feasible solutions in $M_{(i-1)(j-a_i)}$ we compute $M_{ij}$ where $a_i$ denotes the expected weight of item $i$, and giving us $a_i \leq j$. We calculate $M_{ij}$ by adding item $i$ using the following steps. Firstly, all elements from $M_{(i-1)(j)}$ are copied to $M_{ij}$. Secondly, for every solution $S$ in $M_{(i-1)(j-a_i)}$, item $i$ is added to the set of items. If the new set of items $S\cup\{i\}$ is feasible and not dominated by any solution in $M_{ij}$, we remove solutions from $M_{ij}$ which are dominated by $S\cup\{i\}$ and add $S\cup\{i\}$ to $M_{ij}$. For the resulting algorithm (Algorithm \ref{alg:dp}), we can state the following theorem:

\begin{algorithm}[t]
\linespread{1.0}\selectfont
\caption{Dynamic programming for the CCKP}
\begin{algorithmic}[1]
\STATE \textbf{Input:} $n$ items with $\{a_1,..., a_n\}$ and $\{v_1,...,v_n\}$ which are the expected weights and the variances of the weights of items, and $\{p_1,...,p_n\}$ which are the profits of the items. Knapsack capacity $C$. 

\STATE let $M[0,\ldots,n][0,\ldots,C]$ be a new table, each cell $M_{ij}$ stores a set of solutions.
\STATE Initialize $M_{00}= \{\emptyset\}$.
\FOR{$i =1 $ to $ n$}
\FOR{ $j= 0 $ to $C$}
\STATE $M_{ij} = M_{(i-1)j}$;
\IF {$a_i\leq j$}
\FOR{each solution $S \in M_{(i-1)(j-a_i)}$}
\STATE add item $i$ to $S$ to generate a new solution $S'= S \cup \{i\}$,
\IF {solution $S'$ is feasible and not dominated by any solution in $M_{ij}$}
\STATE remove solutions in $M_{ij}$ which are dominated by $S'$.
\STATE add $S'$ to $M_{ij}$.
\ENDIF
\ENDFOR
\ENDIF
\ENDFOR
\STATE $M_{ij}$ stores all feasible and no-dominate to each other solutions.
\ENDFOR
\end{algorithmic}
\label{alg:dp}
\end{algorithm}

\begin{theorem}

For each set of item $\{1,..,i\}$, $M_{ij}$ stores a set of feasible solutions which are not dominated each other with considering all subset of $\{1,...,i\}$ and knapsack capacity $j$, and the optimal solution is among them. In particular, $M_{nC}$ contains the optimal solution with considering all items, which can be obtained via DP approach.

\end{theorem}

\begin{proof}

The statement is true for the first item as there are only two options: choosing or not choosing the first item. So $M_{00}$ stores the solution: $(\emptyset)$. Now we assume that $M_{(i-1)j}$ stores all feasible solutions which are not dominated by each other with respect to all subsets of $\{1,..,i-1\}$ with capacity $j$. 

Now, we contract $M_{ij}$ by first adding all subsets in $M_{(i-1)j}$. Then, if $a_i > j$, item $i$ is not able to add in any solution in $M_{(i-1)(j-a_i)}$ and $M_{ij}=M_{(i-1)j}$. Otherwise, adding item $i$ to a subset $S' \in M_{(i-1)(j-a_i)}$, if the new set of items is still feasible according to capacity $j$, then the expected weight of this solution is $w(S' \cup \{i\})= w(S')+a_i$, the variance of this solution is $v(S' \cup \{i\})= v(S')+\sigma_i^2$ and the profit is $p(S' \cup \{i\})= p(S')+p_i$. Then, if the new solution is not dominated by any solution in $M_{ij}$, we insert this solution into $M_{ij}$, removing all solutions in $M_{ij}$ which are dominated by this solution.  

Therefore, $M_{ij}$ stores all feasible solutions which are not dominated by each other, and we can pick the optimal solution in $M_{nC}$. This concludes the proof.

\end{proof}

We now investigate the runtime for this dynamic program. The feasible solutions in the cell $M_{ij}$ have been calculated by considering all possible combinations of variances and expected weights from the set $\{1,...,i\}$, in which $2^i$ denotes the number of different expected weights and variances in the worst case. We then give the upper bound of the computation time that DP takes to calculate all possible combinations of the variances and expected weights of $M_{ij}$ as $O(2^{2i})$. Therefore, the time until the DP approach calculates the optimal solution to the chance-constrained knapsack problem instance is the summary of $O(nC2^{2n})$. However, in the case that the weights of items conform to a uniform distribution and take values in $[a_i-\delta, a_i+\delta]$, then the variance of items are the same. So for the set $\{1,\ldots,i\}$, the possible combinations of the variance is $O(i)$, and the runtime of the instances, in this case, is bounded by $O(Cn^22^n)$. In the case that the weights of items conform to the uniform distribution $[(1-\beta)a_i, (1+\beta)a_i]$ or the Normal distribution $N(a_i, a_i\beta)$, the variance of an item is equal to the expected weight of this item time the uncertainty $\beta$. Therefore, when tacking those cases, the DP does not need to incorporate the variance of solutions into the domination comparison, and the runtime of the approach is bounded by $O(Cn2^n)$.

\section{Evolutionary algorithms for the CCKP}
\label{sec:EAs}

Evolutionary algorithms are bio-inspired randomized techniques, and they can obtain solutions with good quality in acceptable computation time. In this section, we discuss the use of evolutionary algorithms to solve the CCKP. We begin by designing standard fitness functions for a single-objective approach and a multi-objective approach. Next, we investigate the effectiveness of the simplest single-objective evolutionary algorithm (the (1+1)~EA) and its multi-objective version (GSEMO) to solve the problem using an experimental study. 

\subsection{Single-Objectives Approach} 

\begin{algorithm}[t]
\caption{(1+1)~EA}
\begin{algorithmic}[1]
\STATE Choose $x\in \{0,1\}^n$ uniformly at random.
\WHILE { \textit{stopping criterion not met}}
\STATE $y\leftarrow$ flip each bit of $x$ independently with probability of $\frac{1}{n}$;
\IF{$f(y)\geq f(x)$} 
\STATE $x \leftarrow y$ ;
\ENDIF
\ENDWHILE 
\end{algorithmic}
\label{oneone}
\end{algorithm}

We start by considering a single-objective approach and design a fitness function that can be used in single-objective evolutionary algorithms. The fitness function $\textit{f}$ for the approach needs to take into account the profit of the selected items and the requirement to meet the chance constraint. 
We define the fitness of a solution $X \in \{0,1\}^n$ as:
\begin{eqnarray}
f(X)=(u(X), v(X), P(X))
\label{fit1}
\end{eqnarray}
where $u(X)=max\{\sum_{i \in n}a_i x_i -c,0\}$, $v(X)=max\{Pr\{\sum_{i \in n} w_i x_i >c\}-\alpha,0\}$. For this fitness function, $u(X)$ and $v(X)$ need to be minimized while  $P(X)$ need to be maximized, and we optimize $\textit{f}$ in lexicographic order. The fitness function takes into account two types of infeasible solutions: (1) the expected weight of the solution exceeds the bound of capacity, (2) the probability that the total weight of the solution violating the capacity is greater than $\alpha$. Note that $\alpha$ is usually a small value, and throughout this paper, we work with $\alpha \leq 0.1$. The reason for having the first type of infeasible solutions is that we can not use the inequalities to guide the search, if the expected weight of a solution is below the given capacity bound. Furthermore, the first type of infeasible solutions violates the chance constraint with a probability at least $1/2$ for all probability distributions studied in the experimental part of this paper.

Among solutions that meet the chance constraint, we maximize the profit $P(X)$. Formally, we have the following relation between two search points $x$ and $y$:
\begin{eqnarray}
& & f(x) \succeq f(y) \nonumber\\
&\Longleftrightarrow & \left(u(x) < u(y)\right) or \left(u(x)=u(y)  \wedge v(x) < v(y)\right)\\
& & or \nonumber \left(u(x)=u(y) \wedge v(x)=v(y) \wedge P(x) > P(y) \right) \nonumber
\end{eqnarray}
When comparing solutions, a feasible solution is preferred in a comparison between any infeasible and this feasible solution. Between two feasible solutions, the one with better profit is preferred. When comparing two infeasible solutions, the one with a lower degree of constraint violation is better than the other. 

The fitness function can be used in any single-objective evolutionary algorithm. In this paper, we investigate the performance of the classical (1+1)~EA (see Algorithm~\ref{oneone}). We generate the initial solution with items chosen uniformly at random for the algorithm, and then the (1+1)~EA flips each bit of the current solution with a probability of $1/n$ in the mutation step. In the selection step, the algorithm accepts the offspring if it is at least as good as the parent according to the fitness function $\textit{f}$.

\subsection{Multi-Objective Approach}

Now we consider a multi-objective approach where each search point $X$ is a two-dimensional point in the objective space. We use the following fitness functions:
\begin{equation}
g_1(X)=\left\{
\begin{array}{lcl}
P_r(W(X)\geq C) & & {E[W(X)]<C}\\
1+(E[W(X)]-C)& & {E[W(X)]\geq C}
\label{g1x}
\end{array} \right.
\end{equation}

\begin{equation}
g_2(X)=\left\{
\begin{array}{lcl}
P(X) & & { g_1 (X)\leq \alpha}\\
-1 & & {g_1 (X) >\alpha}
\label{g2x}
\end{array} \right.
\end{equation}
where $W(X)$ denotes the weight of the solution $X$, $E[W(X)]$ denotes the expected weight of the solution. We say solution $Y$ weak-dominates solution $X$ denoted by $Y \succeq X$, \textit{iff} $g_1(Y)\leq g_1(X) \land g_2(Y) \geq g_2 (X)$. Comparing two solutions, the objective function ${g_1}$ guarantees that a feasible solution dominates all infeasible solutions. The objective function ${g_2}$ ensures that the search process is guided towards feasible solutions and that trade-offs in terms of confidence level and profit are computed for feasible solutions.

\begin{algorithm}[t]
\caption{GSEMO}
\begin{algorithmic}[1]
\STATE Choose $x \in \{0,1\}^n$ uniformly at random \;
\STATE $S\leftarrow \{x\}$;
\WHILE{stopping criterion not met}
\STATE choose $x\in S$ uniformly at random;
\STATE $y\leftarrow$ flip each bit of $x$ independently with probability of $\frac{1}{n}$;
\IF{($\not\exists w \in S: w \succeq_{GSEMO} y$)}
\STATE $S \leftarrow (S \cup \{y\})\backslash \{z\in S \vert y \succeq_{GSEMO} z\}$ ;
\ENDIF
\ENDWHILE
\end{algorithmic}
\label{alg:multiobj}
\end{algorithm}

The multi-objective algorithm we consider here is the Global Simple Evolutionary Multi-Objective Optimizer (GSEMO) (see Algorithm \ref{alg:multiobj}), which is a simple multi-objective evolutionary algorithm (SEMO) that searches globally. Laumanns et al., \cite{laumanns} investigated and analyzed a SEMO which starts with an initial solution and stores all non-dominated solutions in each population. In each step, it uniformly chooses a search point from the population and flips one bit of this search point to obtain a new search point. The new population contains search points corresponding to all non-dominated fitness vectors. Giel and Wegener \cite{Giel} investigated a GSEMO due to \cite{laumanns}, this GSEMO works like the SEMO but different in mutation step. In the mutation step, each bit of the search point is flipped independently of the others with probability $\frac{1}{n}$. When GSEMO applied to a single-objective optimization problem, it equals the (1+1)~EA, for both algorithms use the same mutation operator and keep at each time, and any solution does not dominate each other found so far in the optimization process.

\section{Experiments}
\label{sec:experimental}

In this section, we first compare the results obtained by using deterministic approaches, (1+1)~EA and GSEMO, and investigate the performance of these approaches with different surrogate functions of the chance constraint. We describe the experimental design and the chance-constrained knapsack problem instances in the next subsection.

\subsection{Experimental Setup}
\label{subsec:setup}

Firstly, we describe the experimental design and the chance-constrained knapsack problem instances.
In this chapter, all experiments were performed using Java (version 11.0.1) on a MacBook with a 2.3 GHz Intel Core i5 CPU. The benchmark we used is from the literature \cite{Roostapour}, which was created following the approach in \cite{Martello}. We choose two types of instances from the benchmark: \textit{uncorrelated} and \textit{bounded strongly correlated}. The weights and profits of items in the uncorrelated instances are integers that are chosen uniformly at random within $[1, 1000]$. The bounded strongly correlated instances have the tightest bound knapsack among all type of instances where the weights of items are chosen uniformly at random within $[1, 1000]$, and the values of profits are set by the weights. 

We adapt the above instances to the chance-constrained knapsack problem by randomising the weights. Since the weights of items must be positive, we add a value $\gamma$ to every deterministic weight from the benchmark and take it as the expected weight of this item to ensure we can factor in more uncertainty in all instances. Since we change the weights of items, we need to adjust the considered constraint bound. However, shifting the knapsack bound is challenging, as it is necessary to ensure that a solution remains feasible after bound has been changed. Moreover, increasing the knapsack's capacity expands the feasible search space and may introduce additional feasible solutions. Hence, when shifting the capacity of the knapsack, one should consider keeping the feasibility of original solutions and the size of the new feasible search space adaptive. 

We adjust the original knapsack problem instances from the benchmark set as follows. First, we sort the weights of items in ascending order. Then, the first $k$ items with smaller weights are chosen to be added to the knapsack until the original capacity is exceeded. Hence, this number of items $k$ represents the largest number of items that any feasible solution may include. We adapt the capacity bound according to this and set:

\begin{equation}
\begin{split}
C' \leftarrow C + k \gamma.\\
\end{split}
\end{equation}

We set $\gamma =100$ and apply it to the initial benchmark. In this section, we consider three instance categories: (1) instances in which every item weight conforms to the additive uniform distribution and takes the value in $[a_i -\delta, a_i +\delta]$; (2) instances in which every item weight conforms to the multiplicative uniform distribution and takes the value in $[(1-\beta)a_i, (1+\beta)a_i]$; (3) instances in which every item weight conforms to the Normal distribution $N\sim (a_i, a_i\beta)$. The tuples $(\alpha ,\delta, \beta)$ are the combinations of the elements from the sets $\alpha=[0.001,0.01,0.1]$, $\delta =[25, 50]$ and $\beta= [0.01,0.05,0.1]$. Based on this arrangement, we compare the performance of all the algorithms on the chance-constrained knapsack problem. Since (1+1)~EA and GSEMO are bio-inspired algorithms, they cannot produce exact optimal solutions, and the solutions are different in independent runs. Statistical comparisons (comparing each pair of algorithms with surrogate functions) are carried out using the Kruskal-Wallis test (with a $95\%$ confidence interval) integrated with the Wilcoxon sum rank test (with a $95\%$ confidence level). For more detailed descriptions of thees statistical tests, we refer the reader to \cite{Dunn,Driscoll,Ghasemi,corder2014nonparametric}. 

In the next subsection, we compare the performance of all proposed algorithms for instances of different types and sizes. Then, we highlight the differences between the algorithms using Chebyshev's inequality and the Chernoff bound as the surrogate functions of the chance constraint.

\subsection{Experimental Results}

We benchmark our approach with the combinations from the experimental setting described above. Table \ref{tab:100deltaCheby} and Table \ref{tab:100deltaChernoff} list the results obtained from the two types of instances which contain 100 items. The weights of items conform to an additive uniform distribution, and the best solutions among all approaches are emphasised in bold. For each instance, we investigate different settings together with different levels of uncertainty determined by $\delta$ and the requirement on the chance constraint determined by $\alpha$. We apply Chebyshev's inequality to ILP by fixing running time $\{2mins, 6mins, 10mins\}$ for all instances; the results are listed in Table \ref{tab:100deltaCheby}. We use both chance-constrained estimation methods to tackle the chance constraint when using the heuristic approach and the DP approaches. Under the heuristic approach and the DP headings, the \textit{profit} denotes the object value of each approach, and the \textit{time(ms)} denotes the computation time associated with each approach. The computation time associate with DP is one or orders of magnitude longer than for other approaches, for all instances. Where units are presented in double inverted commas ($"-"$), this means that the run time for DP to solve these instances is longer than ten hours. For (1+1)~EA and GSEMO, we provide the results from $30$ independent runs with $10^6$ generations for all instances. I these cases, \textit{profit} denotes the average profit associated with the $30$ runs, and \textit{time(ms)} denotes the average running time associated with the $30$ runs.

\begin{table}[t]
  \centering
  \caption{Statistical results for instances with 100 items based on additive uniform distribution and using Chebyshev's inequality}
  \scalebox{0.7}{
  \makebox[\linewidth][c]{
  \tabcolsep=0.1cm
    \begin{tabular}{crrrrrrrrrrrrrrrrrrr}
    \toprule
          & \multicolumn{1}{l}{Capacity} & \multicolumn{1}{l}{$\delta$} & \multicolumn{1}{l}{$\alpha$} &       & \multicolumn{3}{c}{ILP}             &       & \multicolumn{2}{c}{Heuristic approach}       &       & \multicolumn{2}{c}{DP}        &       & \multicolumn{2}{c}{(1+1)EA}        &       & \multicolumn{2}{c}{GSEMO}  \\
          &       &       &       &       & \multicolumn{1}{l}{2mins} & \multicolumn{1}{l}{6mins} & \multicolumn{1}{l}{10mins} &       & \multicolumn{1}{l}{profit} & \multicolumn{1}{l}{time(ms)} &       & \multicolumn{1}{l}{profit} & \multicolumn{1}{l}{time(ms)} &       & \multicolumn{1}{l}{profit} & \multicolumn{1}{l}{time(ms)} &       & \multicolumn{1}{l}{profits} & \multicolumn{1}{l}{time(ms)} \\
          \midrule
         bou-s-c 1  & 11775 & 25    & 0.001 &       & 13701 & 13701 & 13701 &       & $\mathbf{13707}$ & 261   &       & $-$ & $-$ &       & 13614.8 & 1200.521 &       & $\mathbf{13707}$ & 21090.318 \\
          &       &       & 0.01  &       &$\mathbf{15252}$  & $\mathbf{15252}$ & $\mathbf{15252}$ &       & $\mathbf{15252}$ & 303   &       & $-$ & $-$ &       & 15150.47 & 1207.292 &       & $\mathbf{15252}$ & 19888.28 \\
          &       &       & 0.1   &       & 15768 & 15775 & 15775 &       & $\mathbf{15782}$ & 492   &       & $-$ & $-$ &       & 15680.87 & 1206.771 &       & $\mathbf{15782}$ & 26239.498 \\
          &       & 50    & 0.001 &       & 11757 & 11793 & 11795 &       & $\mathbf{11900}$ & 517  &   &  $-$  & $-$ &       & 11756.27 & 1195.313 &       & 11888.1 & 11637.863 \\
          &       &       & 0.01  &       & 14503 & $\mathbf{14505}$  & $\mathbf{14505}$  &       & $\mathbf{14505}$  & 691   &       & $-$  & $-$ &       & 14416.8 & 1203.125 &       & $\mathbf{14505}$  & 16181.898 \\
          &       &       & 0.1   &       & $\mathbf{15585}$   & $\mathbf{15585}$  & $\mathbf{15585}$  &       & $\mathbf{15585}$  & 304   &       & $-$  & $-$ &       & 15431.37 & 1208.854 &       & $\mathbf{15585}$  & 18520.158 \\
          \hline
        bou-s-c 2  & 31027 & 25    & 0.001 &       & 35045 & 35045 & 35045 &       &  $\mathbf{35069}$ & 2015  &       & $-$  & $-$ &       & 34874.8 & 1226.042 &       & 35068.933 & 51648.881 \\
          &       &       & 0.01  &       & 37005 & 37005 & 37005 &       &   $\mathbf{37027}$  & 2287  &       &   $-$  &  $-$  &       & 36850.73 & 1232.813 &       & 37019.133 & 61361.251 \\
          &       &       & 0.1   &       & 37647 & 37647 & 37657 &       &  $\mathbf{37673}$   & 5056  &       &  $-$ &  $-$  &       & 37467.43 & 1233.854 &       & 37670.367 & 64439.556 \\
          &       & 50    & 0.001 &       & 32096 & 32270 & 32357 &       & 32547 & 2092  &       & $-$  &  $-$  &       & 32332.97 & 1220.833 &       &   $\mathbf{32552.733}$ & 42825.912 \\
          &       &       & 0.01  &       & 36019 & 36033 & 36061 &       & $\mathbf{36131}$ & 2172  &       & $-$  &  $-$  &       & 35937.9 & 1228.125 &       & 36121.667 & 58301.656 \\
          &       &       & 0.1   &       & 37391 & 37391 & 37391 &       & $\mathbf{37406}$ & 2857  &       &  $-$ &  $-$  &       & 37202.63 & 1235.938 &       & 37370.9 & 62496.325 \\
          \hline
        bou-s-c 3   & 58455 & 25    & 0.001 &       & 64190 & 64175 & 64265 &       & $\mathbf{64389}$ & 8042  &     &    $-$     &     $-$    &       & 64185.73 & 1250  &       & 64387.067 & 203302.72 \\
          &       &       & 0.01  &       & 66630 & 66630 & 66630 &       & $\mathbf{66641}$ & 10749 &      &    $-$     &   $-$      &       & 66404.37 & 1253.125 &       & 66639.9 & 154383.75 \\
          &       &       & 0.1   &       & 67339 & 67339 & 67339 &       & $\mathbf{67357}$ & 31192 &        &    $-$     &     $-$    &       & 67164.5 & 1256.771 &       & 67356.733 & 162281.6 \\
          &       & 50    & 0.001 &       & 61111 & 61111 & 61111 &       & 61155 & 8374  &       &      $-$   &    $-$     &       & 61007.6 & 1242.709 &       & $\mathbf{ 61220.2}$ & 168099.53 \\
          &       &       & 0.01  &       & 65478 & 65496 & 65491 &       & $\mathbf{65603}$ & 8420  &       &   $-$      &     $-$    &       & 65372.4 & 1250.521 &       & 65601.8 & 200503.73 \\
          &       &       & 0.1   &       & 66953 & 67001 & 66953 &       & $\mathbf{67059}$ & 17751 &       &    $-$     &   $-$      &       & 66859.37 & 1254.688 &       & 67057.3 & 213127.4 \\
          \hline
   uncorr 1 & 7715  & 25    & 0.001 &       & $\mathbf{ 14354}$ & $\mathbf{ 14354}$ & $\mathbf{ 14354}$ &       & $\mathbf{ 14354}$ & 128   &       & $\mathbf{ 14354}$ & 12605162 &       & 14273.6 & 1205.729 &       & $\mathbf{ 14354}$ & 11479.932 \\
          &       &       & 0.01  &       & $\mathbf{ 16481}$ & $\mathbf{ 16481}$ & $\mathbf{ 16481}$ &       & $\mathbf{ 16481}$ & 120   &       & $\mathbf{ 16481}$ & 23461453 &       & 16433.1 & 1213.021 &       & $\mathbf{ 16481}$ & 13291.141 \\
          &       &       & 0.1   &       & $\mathbf{ 17247}$  & $\mathbf{ 17247}$  & $\mathbf{ 17247}$  &       & $\mathbf{ 17247}$ & 153   &       & $\mathbf{ 17247}$ & 49890459 &       & 17176.57 & 1213.021 &       & $\mathbf{ 17247}$ & 13480.268 \\
          &       & 50    & 0.001 &       & $\mathbf{11599}$ & $\mathbf{11599}$ & $\mathbf{11599}$ &       & $\mathbf{11599}$ & 53    &       & $\mathbf{11599}$ & 33147184 &       & 11478.83 & 1196.875 &       & $\mathbf{11599}$ & 11852.237 \\
          &       &       & 0.01  &       & $\mathbf{15504}$ & $\mathbf{15504}$ & $\mathbf{15504}$  &       & $\mathbf{15504}$  & 188   &       & $\mathbf{15504}$  & 205023878 &       & 15424.1 & 1207.292 &       & $\mathbf{15504}$  & 16188.438 \\
          &       &       & 0.1   &       & $\mathbf{16890}$  & $\mathbf{16890}$ & $\mathbf{16890}$   &       & $\mathbf{16890}$   & 134   &       & $\mathbf{16890}$ &  36045784     &       & 16814.53 & 1214.583 &       & $\mathbf{16890}$ & 14843.431 \\
          \hline
         uncorr 2 & 19545 & 25    & 0.001 &       & 27027 & 27027 & 27027 &       & $\mathbf{27029}$ & 601   &       &  $-$   &     $-$    &       & 26932.67 & 1232.292 &       & $\mathbf{27029}$ & 38636.347 \\
          &       &       & 0.01  &       & 28786 & $\mathbf{28825}$ & $\mathbf{28825}$ &       & $\mathbf{28825}$ & 879   &       &  $-$  &    $-$     &       & 28724.13 & 1238.021 &       & $\mathbf{28825}$ & 54770.144 \\
          &       &       & 0.1   &       & $\mathbf{29415}$ & $\mathbf{29415}$ & $\mathbf{29415}$ &       & $\mathbf{29415}$ & 853   &       &  $-$  &   $-$      &       & 29315.6 & 1241.667 &       & $\mathbf{29415}$ & 56148.136 \\
          &       & 50    & 0.001 &       & 24561 & 24561 & 24561 &       & $\mathbf{24565}$ & 625   &       &   $-$  &   $-$      &       & 24449.3 & 1228.125 &       & $\mathbf{24565}$ & 32166.648 \\
          &       &       & 0.01  &       & 27962 & 27962 & 27962 &       & $\mathbf{27985 }$ & 504   &       &  $-$  &     $-$    &       & 27918.43 & 1237.5 &       & $\mathbf{27985}$  & 40521.42 \\
          &       &       & 0.1   &       & $\mathbf{29165 }$  & $\mathbf{29165 }$ & $\mathbf{29165 }$ &       & $\mathbf{29165 }$ & 648   &       & $-$   &    $-$     &       & 29091.97 & 1238.542 &       & $\mathbf{29165 }$ & 43018.24 \\
          \hline
         uncorr 3 & 36091 & 25    & 0.001 &       & 39181 & 39182 & 39182 &       & $\mathbf{39245 }$ & 1150  &       &  $-$  & $-$ &       & 39192.53 & 1258.333 &       & $\mathbf{39245 }$ & 32204.292 \\
          &       &       & 0.01  &       & $\mathbf{40581 }$ & $\mathbf{40581 }$ & $\mathbf{40581 }$ &       & $\mathbf{40581 }$ & 998   &       & $-$   &     $-$    &       & 40530.27 & 1261.458 &       & $\mathbf{40581 }$ & 46855.019 \\
          &       &       & 0.1   &       &$\mathbf{40991 }$  & $\mathbf{40991 }$  & $\mathbf{40991 }$  &       & $\mathbf{40991 }$  & 1300  &       &  $-$  &     $-$    &       & 40890.8 & 1262.5 &       & 40990.8 & 44955.924 \\
          &       & 50    & 0.001 &       & 36531 & 37068 & 37098 &       & $\mathbf{37180 }$  & 1299  &       & $-$  &   $-$      &       & 37120.03 & 1256.25 &       & $\mathbf{37180 }$  & 45597.177 \\
          &       &       & 0.01  &       & 39739 & 39917 & 39917 &       & $\mathbf{39960 }$  & 1186  &       &   $-$ &    $-$     &       & 39906.7 & 1260.938 &       & $\mathbf{39960 }$  & 34677.722 \\
          &       &       & 0.1   &       & 40754 & $\mathbf{40811 }$  & $\mathbf{40811 }$ &       & $\mathbf{40811 }$ & 1315  &       &  $-$  &   $-$      &       & 40751.47 & 1263.542 &       & $\mathbf{40811 }$ & 38348.789 \\
          \bottomrule
    \end{tabular}}}%
   \label{tab:100deltaCheby}%
\end{table}%

\begin{table}[t]
  \centering
  \caption{Statistical results for instances with 100 items based on additive uniform distribution and using Chernoff bound}
  \scalebox{0.75}{
  \makebox[\linewidth][c]{
  \tabcolsep=0.1cm
    \begin{tabular}{crrrrrrrrrrrrrrr}
    \toprule
          & \multicolumn{1}{l}{Capacity} & \multicolumn{1}{l}{$\delta$} & \multicolumn{1}{l}{$\alpha$} &       & \multicolumn{2}{c}{Heuristic approach}        &       & \multicolumn{2}{c}{DP}        &       & \multicolumn{2}{c}{(1+1)EA}        &       & \multicolumn{2}{c}{GSEMO}  \\
          &       &       &       &       & \multicolumn{1}{l}{profit} & \multicolumn{1}{l}{time(ms)} &       & \multicolumn{1}{l}{profit} & \multicolumn{1}{l}{time(ms)} &       & \multicolumn{1}{l}{profit} & \multicolumn{1}{l}{time(ms)} &       & \multicolumn{1}{l}{profits} & \multicolumn{1}{l}{time(ms)} \\
          \midrule
          bou-s-c 1 & 11775 & 25    & 0.001 &       & $\mathbf{15208}$  & 293   &       & $-$  & $-$  &       & 15104.57 & 1236.979 &       & $\mathbf{15208}$  & 20055.67 \\
          &       &       & 0.01  &       & $\mathbf{15348}$  & 613   &       & $-$  & $-$  &       & 15232.07 & 1236.458 &       & $\mathbf{15348}$ & 25803.75 \\
          &       &       & 0.1   &       & $\mathbf{15599}$  & 1116  &       & $-$  & $-$  &       & 15454.03 & 1231.25 &       & $\mathbf{15599}$ & 20023.96 \\
          &       & 50    & 0.001 &       & 14367  & 715   &       & $-$  & $-$  &       & 14298.13 & 1243.75 &       & $\mathbf{14406}$ & 23056.49 \\
          &       &       & 0.01  &       & $\mathbf{14742}$  & 556   &       &$-$  & $-$  &       & 14613.8 & 1240.104 &       & $\mathbf{14742}$  & 16970.46 \\
          &       &       & 0.1   &       & $\mathbf{15144}$  & 317   &       & $-$   & $-$  &       & 15011.27 & 1235.417 &       & $\mathbf{15144}$  & 17916.35 \\
          \hline
       bou-s-c 2   & 31027 & 25    & 0.001 &       & 36893 & 2545  &       & $-$    & $-$  &       & 36824.33 & 1264.063 &       & $\mathbf{37027}$  & 42719.51 \\
          &       &       & 0.01  &       & $\mathbf{37219}$   & 9410  &       & $-$   & $-$  &       & 36980.27 & 1232.012 &       & $\mathbf{37219}$  & 44305.01 \\
          &       &       & 0.1   &       & $\mathbf{37437}$   & 4156  &       & $-$  & $-$  &       & 37241.03 & 1258.854 &       & $\mathbf{37437}$ & 48447.63 \\
          &       & 50    & 0.001 &       & $\mathbf{36071}$ & 1910  &       &  $-$  & $-$  &       & 35870.83 & 1268.229 &       & 36060.73 & 55605.18 \\
          &       &       & 0.01  &       &  $\mathbf{36423}$  & 2334  &       & $-$    & $-$  &       & 36250.57 & 1226.667 &       & 36416.27 & 60741.89 \\
          &       &       & 0.1   &       & $\mathbf{36893}$  & 3088  &       & $-$   & $-$  &       & 36712.6 & 1346.067 &       & 36888.93 & 57344.08 \\
          \hline
        bou-s-c 3  & 58455 & 25    & 0.001 &       & 65983 & 14478 &       &   $-$     &     $-$   &       & 66404.37 & 1289.063 &       & $\mathbf{66635}$   & 174822.8 \\
          &       &       & 0.01  &       & 66175 & 13885 &       &   $-$     &    $-$    &       & 66612.5 & 1286.458 &       & $\mathbf{66840.87}$   & 119783.3 \\
          &       &       & 0.1   &       & 66407 & 6809  &       &    $-$    &    $-$    &       & 66853.47 & 1286.979 &       & $\mathbf{67095}$ & 123643.2 \\
          &       & 50    & 0.001 &       & 64972 & 20041 &       &   $-$     &    $-$    &       & 65304.87 & 1292.708 &       &$\mathbf{65552.8}$   & 162435.2 \\
          &       &       & 0.01  &       & 65355 & 10396 &       &    $-$    &     $-$   &       & 65755.47 & 1292.188 &       &$\mathbf{65952.93}$  & 165467.6 \\
          &       &       & 0.1   &       & 65879 & 11290 &       &   $-$     &   $-$     &       & 66270.53 & 1291.667 &       & $\mathbf{66451}$   & 174008.9 \\
          \hline
          uncorr 1 & 7715  & 25    & 0.001 &       & $\mathbf{16432}$  & 129   &       & $\mathbf{16432}$ & 169791803  &       & 16342.87 & 1255.729 &       & $\mathbf{16432}$ & 12182.77 \\
          &       &       & 0.01  &       & $\mathbf{16638}$ & 136   &       & $\mathbf{16638}$ & 194176080  &       & 16575.9 & 1240.625 &       & $\mathbf{16638}$ & 11075.88 \\
          &       &       & 0.1   &       & $\mathbf{16945}$  & 144   &       & $\mathbf{16945}$  & 216867406 &       & 16833.13 & 1235.938 &       & $\mathbf{16945}$  & 10966.21 \\
          &       & 50    & 0.001 &       &$\mathbf{15299}$   & 98    &       & $\mathbf{15299}$ & 150530586  &       & 15243.03 & 1254.688 &       & $\mathbf{15299}$ & 15636.04 \\
          &       &       & 0.01  &       & $\mathbf{15812}$  & 161   &       & $\mathbf{15812}$  & 161619933  &       & 15711.03 & 1253.125 &       & $\mathbf{15812}$  & 16543.41 \\
          &       &       & 0.1   &       & $\mathbf{16362}$   & 142   &       & $\mathbf{16362}$  & 174426689 &       & 16287.27 & 1245.313 &       & $\mathbf{16362}$  & 18197.93 \\
          \hline
          uncorr 2 & 19545 & 25    & 0.001 &       & $\mathbf{28786}$  & 634   &       &$-$     & $-$    &       & 28690.93 & 1275  &       & $\mathbf{28786}$  & 29056.24 \\
          &       &       & 0.01  &       &$\mathbf{28982}$   & 839   &       & $-$      & $-$    &       & 28864.17 & 1271.354 &       & $\mathbf{28982}$   & 37327.55 \\
          &       &       & 0.1   &       & $\mathbf{29165}$   & 496   &       & $-$      & $-$    &       & 29090.6 & 1267.708 &       & $\mathbf{29165}$   & 38588.23 \\
          &       & 50    & 0.001 &       & $\mathbf{27962}$   & 469   &       & $-$     & $-$    &       & 27824.13 & 1279.688 &       & $\mathbf{27962}$  & 37555.58 \\
          &       &       & 0.01  &       &$\mathbf{28276}$   & 793   &       & $-$    & $-$    &       & 28165.67 & 1279.167 &       & $\mathbf{28276}$ & 38640.84 \\
          &       &       & 0.1   &       &$\mathbf{28732}$  & 823   &       & $-$     & $-$    &       & 28618.13 & 1276.042 &       & $\mathbf{28732}$  & 39601.95 \\
          \hline
         uncorr 3  & 36091 & 25    & 0.001 &       &$\mathbf{40581}$   & 935   &       & $-$   & $-$    &       & 40539.3 & 1305.729 &       & $\mathbf{40581}$ & 17069.47 \\
          &       &       & 0.01  &       & $\mathbf{40663}$ & 983   &       & $-$    & $-$    &       & 40639.67 & 1303.646 &       & $\mathbf{40663}$ & 24246.41 \\
          &       &       & 0.1   &       & $\mathbf{40830}$ & 1439  &       & $-$    & $-$    &       & 40741.6 & 1300.521 &       & $\mathbf{40830}$ & 40830 \\
          &       & 50    & 0.001 &       & $\mathbf{39960}$  & 1330  &       & $-$  & $-$    &       & 39894.23 & 1307.292 &       & $\mathbf{39960}$ & 51009.26 \\
          &       &       & 0.01  &       & $\mathbf{40150}$  & 1505  &       & $-$    & $-$    &       & 40123.97 & 1306.25 &       & 40149.4 & 40985.79 \\
          &       &       & 0.1   &       & $\mathbf{40482}$  & 1246  &       & $-$   
          & $-$    &       & 40422.3 & 1306.771 &   & 40481.67 & 39895.53 \\
          \bottomrule
    \end{tabular}}}%
  \label{tab:100deltaChernoff}%
\end{table}%

\begin{table}[t]
  \centering
  \caption{Statistical results for instances with 500 items based on additive uniform distribution}
  \scalebox{0.4}{
 \makebox[\linewidth][c]{
 \tabcolsep=0.1cm
    \begin{tabular}{crrrrrrrrrrrrrrrrrrrrr}
    \toprule
          & \multicolumn{1}{l}{Capacity} & \multicolumn{1}{l}{$\delta$} & \multicolumn{1}{l}{$\alpha$} &       & \multicolumn{3}{c}{ILP(Chebyshev)} &       & \multicolumn{6}{c}{(1+1)EA}                   &       & \multicolumn{6}{c}{GSEMO} \\
          &       &       &       &       &       &       &       &        & \multicolumn{3}{c}{Chernoff(1)} & \multicolumn{3}{c}{Chebyshev(2)} &       & \multicolumn{3}{c}{Chernoff(3)} & \multicolumn{3}{c}{Chebyshev(4)} \\
          &       &       &       &       & \multicolumn{1}{l}{2mins} & \multicolumn{1}{l}{6mins} & \multicolumn{1}{l}{10mins}    &   & \multicolumn{1}{l}{profit} & \multicolumn{1}{l}{time(ms)} & \multicolumn{1}{l}{stat} & \multicolumn{1}{l}{profit} & \multicolumn{1}{l}{time(ms)} & \multicolumn{1}{l}{stat} &       & \multicolumn{1}{l}{profit} & \multicolumn{1}{l}{time(ms)} & \multicolumn{1}{l}{stat} & \multicolumn{1}{l}{profit} & \multicolumn{1}{l}{time(ms)} & \multicolumn{1}{l}{stat} \\
          \midrule 
    bou-s-c 1 & 61447 & 25    & 0.001 &       & 74403 & 74385 & 74385 &       & 76464.3 & 5887.5 & {2(+),3(-),4(+)} & 73213.8 & 5866.67 &  {1(-),3(-),4(-)} &       & $\mathbf{77750.4}$   & 25626.04 &  {1(+),2(+),4(+)} & 74207.1 & 27694.79 &  {1(-),2(+),3(-)} \\
          &       &       & 0.01  &       & 77951 & 77951 & 77951 &       & 76890.33 & 5887.50 & 2(+),3(-),4(-) & 76569.53 & 5879.17 & 1(-),3(-),4(-) &       & $\mathbf{78064.47}$   & 25572.4 & 1(+),2(+),4(+) & 77552.07 & 28609.9 & 1(+),2(+),3(-) \\
          &       &       & 0.1   &       & 78649 &$\mathbf{78912}$  & $\mathbf{78912}$ &            &   77194.97 & 5889.06 & 2(-),3(-),4(-) & 77642.27 & 5883.33 & 1(+),3(-),4(-) &       & 78429.77 & 25743.23 & 1(+),2(+),4(-) & 78672.67 & 29402.6 & 1(+),2(+),3(+)   \\
          &       & 50    & 0.001 &       & 66465 & 69443 & 69443 &         &   74964.07 & 5894.79 & 2(+),3(-),4(+) & 68709.77 & 5851.04 & 1(-),3(-),4(-) &       &$\mathbf{76008.8}$  & 28145.31 & 1(+),2(+),4(+) & 69627.7 & 25955.73 & 1(-),2(+),3(-)  \\
          &       &       & 0.01  &       & 76121 & 76222 & 76222 &        & 75627.37 & 5903.13 & 2(+),3(-),4(-) & 75038.43 & 5869.27 & 1(-),3(-),4(-) &       & $\mathbf{76629.73}$  & 28291.67 & 1(+),2(+),4(+) & 75965.73 & 27930.73 & 1(+),2(+),3(-)\\
          &       &       & 0.1   &       & 46027 &$\mathbf{78227}$  & $\mathbf{78227}$ &       &76319.97 & 5892.71 & 2(-),3(-),4(-) & 77231.63 & 5878.65 & 1(+),3(-),4(-) &       & 77385.53 & 28717.19 & 1(+),2(+),4(-) &  78187.27& 28957.81 & 1(+),2(+),3(+)\\
          \hline
        bou-s-c 2  & 162943 & 25    & 0.001 &       & 177096 & 186590 & 186590 &    &    188967.4 & 6010.94 & 2(+),3(-),4(+) & 184846.9 & 5990.10 & 1(-),3(-),4(-) &       &$\mathbf{190575.8}$ & 31282.81 & 1(+),2(+),4(+) & 186081.5 & 34696.35 & 1(-),2(+),3(-)\\
          &  &       & 0.01  &       & 190681 & 190681 & 190757 &      & 189348.5 & 6015.10 & 2(+),3(-),4(-) & 189003.4 & 6017.71 & 1(-),3(-),4(-) &       & $\mathbf{190940.3}$ & 31718.23 & 1(+),2(+),4(+) & 190297.6 & 36672.92 & 1(+),2(+),3(-) \\
          &  &       & 0.1   &       &$\mathbf{192116}$  & $\mathbf{192116}$ & $\mathbf{192116}$ &     &    189852.2 & 6002.08 & 2(-),3(-),4(-) & 190439.4 & 6009.38 & 1(+),3(-),4(-) &       & 191380.6 & 31677.6 & 1(+),2(+),4(-) &  191632 & 37396.88 & 1(+),2(+),3(+)  \\
          &       & 50    & 0.001 &       & 180281 & 180348 & 180348 &       & 187149.1 & 6005.21 & 2(+),3(-),4(+) & 178807.7 & 5981.77 & 1(-),3(-),4(-) &       & $\mathbf{188326.5}$& 36233.33 & 1(+),2(+),4(+) & 180210 & 32353.13 & 1(-),2(+),3(-)  \\
          &       &       & 0.01  &       & 187042 & 187079 & 187079 &     &    187931.4 & 6008.85 & 2(+),3(-),4(-) & 187005.2 & 5997.40 & 1(-),3(-),4(-) &       &$\mathbf{189064.4}$ & 36852.6 & 1(+),2(+),4(+) & 188299.7 & 35451.56 & 1(+),2(+),3(-)\\
          &       &       & 0.1   &       & 189477 & 190630 & 190630 &       &  188798.8 & 6011.46 & 2(-),3(-),4(-) & 189713 & 6005.21 & 1(+),3(-),4(-) &       & 189986 & 37560.42 & 1(+),2(+),4(-) & $\mathbf{190993.3}$ & 37417.19 & 1(+),2(+),3(+) \\
          \hline
       bou-s-c 3   & 307286 & 25    & 0.001 &       & 341305 & 341305 & 341453 &       &   344059.7 & 6132.81 & 2(+),3(-),4(+) & 339020.5 & 6126.56 & 1(-),3(-),4(-) &       & $\mathbf{346115.3}$ & 38438.54 & 1(+),2(+),4(+) & 340895.1 & 40478.65 & 1(-),2(+),3(-)  \\
          & &       & 0.01  &       & 346271 & 346271 & 346271 &        &   344460 & 6126.56 & 2(+),3(-),4(-) & 343891 & 6135.94 & 1(-),3(-),4(-) &       & $\mathbf{346523.6}$ & 39475 & 1(+),2(+),4(+) & 345815.1 & 43563.54 & 1(+),2(+),3(-)  \\
          &  &       & 0.1   &       & 347725 & 347729 & $\mathbf{347905}$ &       &     344976.1 & 6166.56 & 2(-),3(-),4(-) & 345622.4 & 6134.38 & 1(+),3(-),4(-) &       & 347048.1 & 39237.5 & 1(+),2(+),4(-) &  347418.7 & 44185.42 & 1(+),2(+),3(+)  \\
          &       & 50    & 0.001 &       & 169122 & 169122 & 210932 &     &     341860.8 & 6127.08 & 2(+),3(-),4(+) & 332022.1 & 6108.85 & 1(-),3(-),4(-) &       & $\mathbf{343418.6}$ & 47959.9 & 1(+),2(+),4(+) & 333882.8 & 38178.13 & 1(-),2(+),3(-) \\
          &       &       & 0.01  &       & 341530 & 343866 & 344052 &    &     342619.9 & 6132.29 & 2(+),3(-),4(-) & 341661.9 & 6108.85 & 1(-),3(-),4(-) &       & $\mathbf{344242} $ & 48193.75 & 1(+),2(+),4(+) & 343537.7 & 41935.42 & 1(+),2(+),3(-)  \\
          &       &       & 0.1   &       & 346321 & 347064 &$\mathbf {347064}$ &         &     343776 & 6128.65 & 2(-),3(-),4(-) & 344848.8 & 6131.77 & 1(+),3(-),4(-) &       & 345325.3 & 49109.38 & 1(+),2(+),4(-) & 346723.6 & 44240.1 & 1(+),2(+),3(+)  \\
          \hline
          uncorr 1 & 37686 & 25    & 0.001 &       & 16998 & 58826 & 66238 &    &  85264.77 & 5956.25 & 2(+),3(-),4(+) & 80509.37 & 5919.79 & 1(-),3(-),4(-) &       & $\mathbf{86117.7}$  & 20050.52 & 1(+),2(+),4(+) & 81319.77 & 20571.88 & 1(-),2(+),3(-)  \\
          &  &       & 0.01  &       & 86128 & 86128 & 86128 &      &  85685.3 & 5955.73 & 2(+),3(-),4(-) & 85187.5 & 5930.73 & 1(-),3(-),4(-) &       & $\mathbf{86540.37}$ & 20035.94 & 1(+),2(+),4(+) & 86002.63 & 21854.17 & 1(+),2(+),3(-)  \\
          &  &       & 0.1   &       & 87252 & 87675 &   $\mathbf{87675}$      &         &   86218.6 & 5950.00 & 2(-),3(-),4(-) & 86899.67 & 5939.06 & 1(+),3(-),4(-) &       & 87066.83 & 20544.27 & 1(+),2(+),4(-) &  87518.63 & 22421.88 & 1(+),2(+),3(+)  \\
          &       & 50    & 0.001 &       &   15712    &   59369    &   70672    &       &   83116.43 & 5955.73 & 2(+),3(-),4(+) & 73929.1 & 5895.31 & 1(-),3(-),4(-) &       & $\mathbf{83829.9}$  & 23189.06 & 1(+),2(+),4(+) & 74731.03 & 20529.69 & 1(-),2(+),3(-) \\
          &       &       & 0.01  &       & 58826      &    83985   &   83985    &      &     84020.67 & 5955.21 & 2(+),3(-),4(+) & 83065.43 & 5917.19 & 1(-),3(-),4(-) &       & $\mathbf{84668.5}$  & 22652.6 & 1(+),2(+),4(+) & 83820.63 & 21008.33 & 1(-),2(+),3(-)\\
          &       &       & 0.1   &       & $\mathbf{87068}$      &    $\mathbf{87068}$     & $\mathbf{87068}$    &       &   85065.93 & 5953.13 & 2(-),3(-),4(-) & 86128.03 & 5931.25 & 1(+),3(+),4(-) &       & 85719.73 & 23177.6 & 1(+),2(-),4(-) & 86849.87 & 21986.46 & 1(+),2(+),3(+)  \\
          \hline
        uncorr 2  & 93559 & 25    & 0.001 &       &    143444   &   143448    &    143455   &     &  147143.5 & 6072.40 & 2(+),3(-),4(+) & 142829.3 & 6061.98 & 1(-),3(-),4(-) &       & $\mathbf{147754.5}$  & 24839.58 & 1(+),2(+),4(+) & 143393.7 & 24064.58 & 1(-),2(+),3(-)  \\
          &       &       & 0.01  &       &    101606   &    147780   &   147780    &         &     147539.5 & 6072.40 & 2(+),3(-),4(-) & 147115.2 & 6061.98 & 1(-),3(-),4(-) &       & $\mathbf{148125.7}$ & 25049.48 & 1(+),2(+),4(+) & 147605.7 & 24511.98 & 1(+),2(+),3(-)\\
          &       &       & 0.1   &       &   $\mathbf{149217}$     &     $\mathbf{149217}$  &    $\mathbf{149217}$   &       &   147936.9 & 6068.75 & 2(-),3(-),4(-) & 148518 & 6079.69 & 1(+),3(-),4(-) &       & 148594.9 & 24431.25 & 1(+),2(+),4(-) & 148960.7 & 25139.06 & 1(+),2(+),3(+) \\
          &       & 50    & 0.001 &       &      100660 &   120860    &    137219   &     &     145291.4 & 6063.54 & 2(+),3(-),4(+) & 136767 & 6048.44 & 1(-),3(-),4(-) &       & $\mathbf{145594.4}$   & 26981.25 & 1(+),2(+),4(+) & 137400 & 22478.13 & 1(-),2(+),3(-) \\
          &       &       & 0.01  &       &    101554   &   145726    &    145726   &   &    146067.2 & 6066.67 & 2(+),3(-),4(+) & 145189.8 & 6048.44 & 1(-),3(-),4(-) &       &$\mathbf{146326}$   & 27126.04 & 1(+),2(+),4(+) & 145655.9 & 24484.9 & 1(-),2(+),3(-) \\
          &       &       & 0.1   &       &   101648    &   $\mathbf{148531}$    &     $\mathbf{148531}$   &        &     146945.4 & 6070.83 & 2(-),3(-),4(-) & 147955.5 & 6075.52 & 1(+),3(+),4(-) &       & 147254 & 26751.56 & 1(+),2(-),4(-) & 148368.5 & 24806.77 & 1(+),2(+),3(+)  \\
          \hline
       uncorr 3   & 171819 & 25    & 0.001 &       &     138416  &   200244    &   204498    &        &     207630.3 & 6189.06 & 2(+),3(-),4(+) & 204214.7 & 6166.67 & 1(-),3(-),4(-) &       & $\mathbf{208249.2}$ & 22107.29 & 1(+),2(+),4(+) & 204790.2 & 20989.58 & 1(-),2(+),3(-) \\
          &       &       & 0.01  &       &   208260    &  208260     &    208260   &       &      207978.8 & 6205.73 & 2(+),3(-),4(-) & 207537.9 & 6172.92 & 1(-),3(-),4(-) &       &$\mathbf{208532}$ & 22053.13 & 1(+),2(+),4(+) & 208161.3 & 21864.06 & 1(+),2(+),3(-) \\
          &       &       & 0.1   &       & $\mathbf{ 209344}$     &    $\mathbf{ 209344}$   &  $\mathbf{ 209344}$     &       &      208266.9 & 6196.35 & 2(-),3(-),4(-) & 208671.4 & 6169.79 & 1(+),3(-),4(-) &       & 208889.4 & 22261.46 & 1(+),2(+),4(-) & 209180.5 & 21320.31 & 1(+),2(+),3(+)\\
          &       & 50    & 0.001 &       &      114287 &     144843  & 144843      &    &   206180.1 & 6193.75 & 2(+),3(-),4(+) & 199212.3 & 6169.79 & 1(-),3(-),4(-) &       & $\mathbf{ 206374.7}$ & 25970.83 & 1(+),2(+),4(+) & 199810.8 & 19647.4 & 1(-),2(+),3(-)  \\
          &       &       & 0.01  &       &    154981   &   156922    &  172798     &    &   206719.9 & 6193.23 & 2(+),3(-),4(+) & 206014 & 6175.00 & 1(-),3(-),4(-) &       &$\mathbf{ 206915.9}$ & 26699.48 & 1(+),2(+),4(+) & 206622.3 & 22066.67 & 1(-),2(+),3(-) \\
          &       &       & 0.1   &       &    208391   &    $\mathbf{ 208899}$   &  $\mathbf{208899 }$  &       &      207471.5 & 6190.10 & 2(-),3(-),4(-) & 208135.4 & 6184.38 & 1(+),3(+),4(-) &       & 207649 & 26861.46 & 1(+),2(-),4(-) & 208754& 21738.54 & 1(+),2(+),3(+)\\
          \bottomrule
    \end{tabular}}}%
  \label{tab:500deltaUniform}%
\end{table}%

\begin{figure}[t]
    \centering
\begin{tikzpicture}
    \begin{axis}[
        width  = 0.9* \textwidth,
        height = 8cm,
        major x tick style = transparent,
        ybar= 2*\pgflinewidth,
        bar width=8pt,
        ymajorgrids = true,
        ylabel = {Profit},
        symbolic x coords={$\mathbb{A}$,$\mathbb{B}$,$\mathbb{C}$,$\mathbb{D}$, $\mathbb{E}$,$\mathbb{F}$,$\mathbb{G}$},
        xtick = data,
        scaled y ticks = true,
        enlarge x limits= 0.1,
        ymin=0,
        legend cell align=left,
        legend style={
               at={(1,0)},
                anchor=south east,
                column sep=1ex
        }
    ]
        \addplot[style={bblue,fill=bblue,mark=none}]
            coordinates {
 ($\mathbb{A}$, 13701) ($\mathbb{B}$, 13707) ($\mathbb{C}$,15208 ) ($\mathbb{D}$,13614.8 ) ($\mathbb{E}$, 15104.57) ($\mathbb{F}$,13707 ) ($\mathbb{G}$,15208 )};

        \addplot[style={rred,fill=rred,mark=none}]
             coordinates { ($\mathbb{A}$, 15252) ($\mathbb{B}$, 15252) ($\mathbb{C}$,15348 ) ($\mathbb{D}$,15150.47) ($\mathbb{E}$, 15232.07) ($\mathbb{F}$,15252 ) ($\mathbb{G}$,15348 )};

        \addplot[style={ggreen,fill=ggreen,mark=none}]
             coordinates {($\mathbb{A}$, 15775) ($\mathbb{B}$, 15782) ($\mathbb{C}$,15599 ) ($\mathbb{D}$,15680.87) ($\mathbb{E}$, 15454.03) ($\mathbb{F}$,15782) ($\mathbb{G}$,15599 )};

        \legend{$\alpha=0.001$,$\alpha=0.01$,$\alpha=0.1$}
    \end{axis}
\end{tikzpicture}
    \caption{Comparison for different values of $\alpha$ with fixed $\delta=25$.}
    \label{fig:fixdelta}
\end{figure}

\begin{figure}
    \centering
\begin{tikzpicture}
    \begin{axis}[
        width  = 0.9* \textwidth,
        height = 8cm,
        major x tick style = transparent,
        ybar= 2*\pgflinewidth,
        bar width=8pt,
        ymajorgrids = true,
        ylabel = {Profit},
        symbolic x coords={$\mathbb{A}$,$\mathbb{B}$,$\mathbb{C}$,$\mathbb{D}$, $\mathbb{E}$,$\mathbb{F}$,$\mathbb{G}$},
        xtick = data,
        scaled y ticks = true,
        enlarge x limits= 0.1,
        ymin=0,
        legend cell align=left,
        legend style={
               at={(1,0)},
                anchor=south east,
                column sep=1ex
        }
    ]
        \addplot[style={bblue,fill=bblue,mark=none}]
            coordinates {
 ($\mathbb{A}$, 15252) ($\mathbb{B}$, 15252) ($\mathbb{C}$,15348 ) ($\mathbb{D}$,15150.47 ) ($\mathbb{E}$, 15232.07) ($\mathbb{F}$,15252 ) ($\mathbb{G}$,15348 )};

        \addplot[style={rred,fill=rred,mark=none}]
             coordinates { ($\mathbb{A}$, 14505) ($\mathbb{B}$, 14505) ($\mathbb{C}$,14742 ) ($\mathbb{D}$,14416.8) ($\mathbb{E}$, 14613.8) ($\mathbb{F}$,14505 ) ($\mathbb{G}$,14742 )};

        \legend{$\delta=25$,$\delta=50$}
    \end{axis}
\end{tikzpicture}
    \caption{Comparison for different values of $\delta$ with fixed $\alpha=0.01$.}
    \label{fig:fixalpha}
\end{figure}

The first insight from Table $\ref{tab:100deltaCheby}$ and Table \ref{tab:100deltaChernoff} is according to the deterministic approaches, the objective values obtained by the DP are at least as good as the results obtained by heuristic approach and ILP. When using Chebyshev's inequality to estimate the probability of violating the capacity, the heuristic approach performs at least as well as ILP. In observing the values listed in the \textit{Heuristic approach} and \textit{GSEMO} columns in Table \ref{tab:100deltaCheby}, we find that, in most of the instances, the results obtained using the heuristic approach are better than those obtained using GSEMO. Furthermore, the computation time associated with the heuristic approach is shorter than that associated with GSEMO. However, when applying the Chernoff bound to the problems, the results obtained using GSEMO are better than those obtained using the heuristic approach in most instances. Moreover, it can be observed from the tables that the GSEMO outperforms (1+1)~EA in combination with different estimate methods.

Furthermore, it can be seen from the values in the \textit{profit} columns that the profit decreases as the value of $\alpha$ decreases with a fixed value of $\delta$, and that the profit decreases when the value of $\delta$ increases, while the value of $\alpha$ stays constant. We take the first instances from Table \ref{tab:100deltaCheby} and Table \ref{tab:100deltaChernoff} as an examples, presenting the effects of $\delta$ and $\alpha$ in Figures \ref{fig:fixdelta} and \ref{fig:fixalpha}. Figure \ref{fig:fixdelta} shows how the chance-constrained bound (denoted as $\alpha$) affects the quality of the solution when the uncertainty of weights is fixed. In both figures, the $x$ labels are the algorithms combined with probability tools, where $\mathbb{A}$ denotes ILP(2mins)-Chebyshev's inequality,  $\mathbb{B}$ denotes heuristic approach-Chebyshev's inequality,  $\mathbb{C}$ denotes heuristic approach-Chernoff bound, $\mathbb{D}$ denotes (1+1)~EA-Chebyshev's inequality, $\mathbb{E}$ denotes (1+1)~EA-Chernoff bound, $\mathbb{F}$ denotes GSEMO-Chebyshev's inequality and $\mathbb{G}$ denotes GSEMO-Chernoff bound.

Figure \ref{fig:fixdelta} shows how the chance-constrained bound denoted as $\alpha$ affects the solution quality when the uncertainty of weights is fixed. Correspondingly, Figure \ref{fig:fixalpha} shows how the uncertainty of weights affects the solution quality when $\alpha$ is fixed. Both bar charts show the average values of (1+1)~EA and GSEMO with respect to probability inequality. In Figure \ref{fig:fixdelta}, the three bars of each group correspond to the value of $\alpha$ equal to $\{0.001,0.01,0.1\}$. In Figure \ref{fig:fixalpha}, the two bars of each group correspond to the value of $\delta = \{25, 50\}$. We can see from the figures that profits increase as the chance-constrained bound $\alpha$ increases or the uncertainty of weights $\delta$ decreases. Intuitively, this makes sense, as the relaxing of $\alpha$ or the tightening of $\delta$ allows the algorithms to compute solutions that are closer to the capacity and thereby increase their profit.

Next, we investigate the difference between the estimated methods. It can be observed from Table \ref{tab:500deltaUniform}, for all approaches, that when the value of $\alpha$ is small (i.e.,$\alpha=0.001$ or $0.01$), the profits obtained using the Chernoff bound are significantly better than those obtained using Chebyshev's inequality. In contrast, if $\alpha=0.1$, the results obtained using Chebyshev's inequality are significantly better than those obtained using the Chernoff bound. The experimental results match the theoretical implications of Theorem \ref{theo:compare} and our related discussion.

In Table \ref{tab:500deltaUniform}, the \textit{profit} under (1+1)~EA and GSEMO denotes the average profit of the $30$ runs, and \textit{stat} denotes the statistical comparison of the algorithms combined with the estimation methods. The numbers in the \textit{stat} column denote the significance of the results for an algorithm and constraint violation estimation method. For example, in Table \ref{tab:500deltaUniform}, the numbers $(2(+),3(-),4(+))$ listed in the \textit{stat} column in the first row under \textit{(1+1)~EA - Chernoff bound (1)} means that the current algorithm is significantly better than \textit{Chebyshev's inequality (2)} and \textit{Chebyshev's inequality (4)}, and significantly worse than \textit{Chernoff bound (3)}. 

We now compare the performances of (1+1)~EA with GSEMO. Looking at the values listed in the $\textit{stat}$ columns of Table \ref{tab:500deltaUniform}, it can be seen that the results listed under \textit{(1+1)~EA-Chernoff (1)} are significantly worse than those under \textit{GSEMO-Chernoff (3)}. A similar relationship exists between the values under \textit{(1+1)~EA-Chebyshev (2)} and \textit{GSEMO-Chebysehv(4)}, indicating that the performance of GSEMO is significantly better than that of (1+1)~EA under all probability tails.

\begin{table}[htbp]
  \centering
  \caption{Statistical results for the instance eil101 with 100 items based on multiplicative uniform distribution and Chebyshev's inequality}
  \scalebox{0.5}{
 \makebox[\linewidth][c]{
 \tabcolsep=0.1cm
    \begin{tabular}{crrrrrrrrrrrrrrrrrrr}
    \toprule
          & \multicolumn{1}{l}{Capacity} & \multicolumn{1}{l}{$\beta$} & \multicolumn{1}{l}{$\alpha$} &       & \multicolumn{3}{c}{ILP}              &       & \multicolumn{2}{c}{Heuristic approach}        &       & \multicolumn{2}{c}{DP}       &       & \multicolumn{2}{c}{(1+1)EA}       &       & \multicolumn{2}{c}{GSEMO}   \\
          &       &       &       &       & \multicolumn{1}{l}{2mins} & \multicolumn{1}{l}{6mins} & \multicolumn{1}{l}{10mins} &       & \multicolumn{1}{l}{profit} & \multicolumn{1}{l}{time(ms)} &       & \multicolumn{1}{l}{profit} & \multicolumn{1}{l}{time(ms)} &       & \multicolumn{1}{l}{profit} & \multicolumn{1}{l}{time(ms)} &       & \multicolumn{1}{l}{profits} & \multicolumn{1}{l}{time(ms)} \\
          \midrule
          bou-s-c 1 & 11775 & 0.01  & 0.001 &       & 15309 & 15309 & 15318 &       &$\mathbf{15350}$  & 690   &       & $\mathbf{15350}$  & 4810778 &       & 15230.67 & 1221.88 &       & 15349.6 & 62382.7675 \\
          &       &       & 0.01  &       & $\mathbf{15801}$ & $\mathbf{15801}$  & $\mathbf{15801}$  &       & $\mathbf{15801}$  & 1072  &       & $\mathbf{15801}$  & 12734208 &       & 15684.93 & 1234.9 &       & $\mathbf{15801}$  & 66815.46 \\
          &       &       & 0.1   &       & $\mathbf{15946}$  & $\mathbf{15946}$  & $\mathbf{15946}$  &       & $\mathbf{15946}$  & 1122  &       & $\mathbf{15946}$  & 26128269 &       & 15825.47 & 1218.75 &       & 15940.4 & 71332.5442 \\
          &       & 0.05  & 0.001 &       & 13100 & 13199 & 13199 &       & $\mathbf{13245}$  & 1369  &       & $\mathbf{13245}$  & 6293917 &       & 13123.27 & 1222.92 &       & $\mathbf{13245}$  & 55164.344 \\
          &       &       & 0.01  &       & 14905 & 14905 & 14905 &       & $\mathbf{15027}$  & 1073  &       & $\mathbf{15027}$ & 16956887 &       & 14926.2 & 1226.56 &       & 14992.4333 & 56806.4804 \\
          &       &       & 0.1   &       & 15702 & 15702 & 15702 &       & $\mathbf{15710}$ & 1181  &       & $\mathbf{15710}$  & 31257870 &       & 15581.1 & 1221.88 &       & $\mathbf{15710}$  & 59787.857 \\
          &       & 0.1   & 0.001 &       & 11128 & 11128 & 11128 &       & 11247 & 646   &       &$\mathbf{11270}$   & 6724128 &       & 11193.8 & 1222.4 &       & 11269.0667 & 53176.8783 \\
          &       &       & 0.01  &       & 13983 & 14002 & 14043 &       & $\mathbf{14143}$  & 1174  &       & $\mathbf{14143}$ & 19649101 &       & 14037.23 & 1222.4 &       & 14141.4667 & 59414.7318 \\
          &       &       & 0.1   &       & 15312 & 15346 & 15346 &       & $\mathbf{15384}$ & 1267  &       & $\mathbf{15384}$ & 35553950 &       & 15257.7 & 1220.31 &       & $\mathbf{15384}$ & 65737.6751 \\
          \hline
        bou-s-c 2  & 31027 & 0.01  & 0.001 &       & 36524 & 36634 & 36634 &       & $\mathbf{36728}$ & 5427  &     &   $-$      &$-$         &       & 36554.43 & 1264.58 &       & 36694.4667 & 116506.762 \\
          &       &       & 0.01  &       & 37475 & 37515 & 37515 &       & $\mathbf{37557}$ & 8078  &       &  $-$       &     $-$    &       & 37375.73 & 1261.46 &       & 37488.7 & 119125.093 \\
          &       &       & 0.1   &       & 37796 & 37796 & 37796 &       & $\mathbf{37819}$ & 14631 &       &   $-$     &     $-$    &       & 37679.77 & 1257.29 &       & 37816.6667 & 154614.705 \\
          &       & 0.05  & 0.001 &       &32069 & 32170 & 32239 &       & 32685 & 4349  &       &     $-$    &      $-$   &       & 32607.13 & 1277.08 &       &$\mathbf{32701.1}$   & 210753.913 \\
          &       &       & 0.01  &       & 35855 & 36002 & 36002 &       & $\mathbf{36069}$ & 5834  &       &  $-$       &   $-$      &       & 35903.27 & 1267.19 &       & 36044.3 & 200180.797 \\
          &       &       & 0.1   &       & 37284 & 37284 & 37284 &       & $\mathbf{37367 }$& 6016  &       &    $-$     &   $-$      &       & 37170.57 & 1259.38 &       & 37301.1667 & 172459.537 \\
          &       & 0.1   & 0.001 &       & 28246 & 28354 & 28354 &       & 28787 & 2575  &       &   $-$      &     $-$    &       & 28724.27 & 1286.46 &       & $\mathbf{28821 }$ & 91837.3079 \\
          &       &       & 0.01  &       & 34135 & 34135 & 34135 &       & $\mathbf{34419 }$ & 5018  &       &   $-$      &    $-$     &       & 34278.83 & 1280.21 &       & 34418.5333 & 99988.7866 \\
          &       &       & 0.1   &       & 36576 & 36701 & 36701 &       & $\mathbf{36784 }$ & 5886  &       &      $-$   &    $-$     &       & 36619.67 & 1263.02 &       & 36758.13 & 78642.88 \\
          \hline
       bou-s-c 3   & 58455 & 0.01  & 0.001 &       & 65667 & 65667 & 65667 &       & $\mathbf{65737 }$  & 63940 &       &   $-$      &    $-$     &       & 65510.7 & 1316.15 &       & 65690.53 & 97893.23 \\
          &       &       & 0.01  &       & 66958 & 67011 & 67011 &       & $\mathbf{67077}$ & 80608 &       &   $-$      &     $-$    &       & 66831.33 & 1311.46 &       & 66995.6 & 101069.27 \\
          &       &       & 0.1   &       & 67461 & 67461 & 67461 &       & $\mathbf{67480}$ & 151215 &       &      $-$   &   $-$      &       & 67290.13 & 1313.02 &       & 67439.1 & 100545.31 \\
          &       & 0.05  & 0.001 &       & 58770 & 58770 & 58770 &       & $\mathbf{59319}$ & 18667 &       & $-$        &    $-$     &       & 59223.87 & 1344.79 &       & 59304.3 & 84612.5 \\
          &       &       & 0.01  &       & 64472 & 64500 & 64500 &       &$\mathbf{64713}$  & 65809 &       &    $-$     &   $-$      &       & 64492.3 & 1320.83 &       & 64652.57 & 96270.83 \\
          &       &       & 0.1   &       & 66671 & 66671 & 66671 &       & $\mathbf{66742}$ & 69379 &       &  $-$       &  $-$       &       & 66507.07 & 1310.94 &       & 66691.73 & 100061.46 \\
          &       & 0.1   & 0.001 &       & 48564 & 51059 & 50689 &       & 53359 & 8798  &       &    $-$     &   $-$      &       & 53265.03 & 1358.33 &       &$\mathbf{ 53359.13}$ & 74005.21 \\
          &       &       & 0.01  &       & 61574 & 61657 & 61657 &       & 61974 & 58018 &       &   $-$      &   $-$      &       & 61924.97 & 1325  &       &$\mathbf{ 62063.7 }$& 90768.75 \\
          &       &       & 0.1   &       & 65674 & 65738 & 65738 &       &$\mathbf{ 65837 }$ & 47670 &       &      $-$   &      $-$   &       & 65633.57 & 1319.27 &       & 65774.27 & 97894.79 \\
          \hline
    uncorr 1  & 7715  & 0.01  & 0.001 &       & \cellcolor[gray]{.9} $\mathbf{17064}$ & \cellcolor[gray]{.9}{4s} &       &       & $\mathbf{17064}$ & 204   &       & $\mathbf{17064}$ & 31106720 &       & 16989.4 & 1218.23 &       & $\mathbf{17064}$ & 8394.56443 \\
          &       &       & 0.01  &       & \cellcolor[gray]{.9}$\mathbf{17366}$ & \cellcolor[gray]{.9}{3.08s} &       &       & $\mathbf{17366}$ & 191   &       & $\mathbf{17366}$ & 49253218 &       & 17346.17 & 1229.69 &       & $\mathbf{17366}$ & 8361.6105 \\
          &       &       & 0.1   &       & \cellcolor[gray]{.9} $\mathbf{17499}$ & \cellcolor[gray]{.9} {1.39s} &       &       & $\mathbf{17499}$  & 199   &       & $\mathbf{17499}$  & 56877261 &       & 17444 & 1263.54 &       & $\mathbf{17499}$  & 6842.4115 \\
          &       & 0.05  & 0.001 &       & 15322 & 15322 & 15322 &       &$\mathbf{15385}$   & 151   &       & $\mathbf{15385}$   & 13416371 &       & 15293.33 & 1210.94 &       & $\mathbf{15385}$   & 7834.6693 \\
          &       &       & 0.01  &       & $\mathbf{16875}$   & $\mathbf{16875}$  & $\mathbf{16875}$  &       & $\mathbf{16875}$  & 185   &       & $\mathbf{16875}$  & 19806209 &       & 16808.23 & 1219.27 &       & $\mathbf{16875}$  & 7505.27357 \\
          &       &       & 0.1   &       & \cellcolor[gray]{.9}$\mathbf{17325}$ & \cellcolor[gray]{.9}{4.64s} &       &       & $\mathbf{17325}$ & 176   &       & $\mathbf{17325}$ & 39985165 &       & 17290.97 & 1218.23 &       & $\mathbf{17325}$ & 6745.7595 \\
          &       & 0.1   & 0.001 &       &$\mathbf{13589}$  & $\mathbf{13589}$ & $\mathbf{13589}$ &       & $\mathbf{13589}$ & 70    &       & $\mathbf{13589}$ & 463375 &       & 13507.6 & 1209.38 &       & $\mathbf{13589}$ & 4679.62223 \\
          &       &       & 0.01  &       & \cellcolor[gray]{.9} $\mathbf{16143}$ & \cellcolor[gray]{.9}{60s} &       &       & $\mathbf{16143}$ & 117   &       & $\mathbf{16143}$ & 9685826 &       & 16047.6 & 1216.15 &       & $\mathbf{16143}$ & 6412.02927 \\
          &       &       & 0.1   &       & \cellcolor[gray]{.9}$\mathbf{17064}$ & \cellcolor[gray]{.9}{60s} &       &       & $\mathbf{17064}$  & 158   &       & $\mathbf{17064}$  & 28925543 &       & 16994.47 & 1218.75 &       & $\mathbf{17064}$  & 9153.54877 \\
          \hline
        uncorr 2  & 19545 & 0.01  & 0.001 &       & 29075 & 29085 & 29085 &       &$\mathbf{29106}$   & 940   &       & $-$    & $-$   &       & $\mathbf{29106}$  & 1265.63 &       & $\mathbf{29106}$  & 15434.3676 \\
          &       &       & 0.01  &       &\cellcolor[gray]{.9} $\mathbf{29466}$ & \cellcolor[gray]{.9}{16.52s} &       &       & $\mathbf{29466}$ & 1195  &       &  $-$       &   $-$      &       & 29390.6 & 1165.63 &       & $\mathbf{29466}$ & 15147.9511 \\
          &       &       & 0.1   &       & \cellcolor[gray]{.9} $\mathbf{29585}$  &  \cellcolor[gray]{.9}{3.01s} &       &       & $\mathbf{29585}$  & 1001  &       &  $-$       &   $-$      &       & 29521.43 & 1265.1 &       & $\mathbf{29585}$  & 21475.098 \\
          &       & 0.05  & 0.001 &       & 27087 & 27087 & 27087 &       &$\mathbf{27096}$   & 814   &       &    $-$     &   $-$      &       & 26974.73 & 1256.25 &       & $\mathbf{27096}$  & 18433.7486 \\
          &       &       & 0.01  &       & 28740 & 28815 & 28742 &       & $\mathbf{28815}$ & 728   &       &   $-$      &   $-$      &       & 28687.23 & 1267.71 &       & $\mathbf{28815}$ & 15621.9082 \\
          &       &       & 0.1   &       &\cellcolor[gray]{.9} $\mathbf{29415}$  & \cellcolor[gray]{.9}{118.91s} &       &       &  $\mathbf{29415}$  & 656   &       &  $-$       &  $-$       &       & 29327.2 & 1266.67 &       & $\mathbf{29415}$  & 22129.7554 \\
          &       & 0.1   & 0.001 &       & 25006 & 25033 & 25006 &       &$\mathbf{25043}$   & 540   &       &   $-$      &  $-$       &       & 24941.5 & 1248.44 &       & $\mathbf{25043}$  & 15159.6398 \\
          &       &       & 0.01  &       &$\mathbf{27985}$   & $\mathbf{27985}$ & $\mathbf{27985}$ &       & $\mathbf{27985}$ & 958   &       &   $-$      &   $-$      &       & 27899.77 & 1261.98 &       & $\mathbf{27985}$ & 14720.0369 \\
          &       &       & 0.1   &       &$\mathbf{29155}$  & $\mathbf{29155}$  & $\mathbf{29155}$  &       & $\mathbf{29155}$  & 740   &       &  $-$       &   $-$      &       & 29072.53 & 1265.63 &       & $\mathbf{29155}$  & 15869.881 \\
          \hline
      uncorr 3  & 36091 & 0.01  & 0.001 &       & 40651 & $\mathbf{40672}$  &$\mathbf{40672}$   &       & $\mathbf{40672}$  & 1351  &       &   $-$      &   $-$      &       & 40633.83 & 1342.19 &       & 40671.7 & 55625.4561 \\
          &       &       & 0.01  &       &\cellcolor[gray]{.9} $\mathbf{40994}$ & \cellcolor[gray]{.9}{57.04s} &       &       & $\mathbf{40994}$  & 1268  &       &  $-$       &   $-$      &       & 40938.17 & 1342.71 &       & $\mathbf{40994}$  & 53580.4838 \\
          &       &       & 0.1   &       &\cellcolor[gray]{.9} $\mathbf{41091}$ & \cellcolor[gray]{.9}{1.82s} &       &       & $\mathbf{41091}$ & 1264  &       &   $-$      &      $-$   &       & 41054.23 & 1342.19 &       & 41090.9333 & 62323.5451 \\
          &       & 0.05  & 0.001 &       & 38742 & 38742 & 38742 &       & $\mathbf{38743}$ & 1414  &       &      $-$   &   $-$      &       & 38712.4 & 1328.13 &       & $\mathbf{38743}$ & 53287.6329 \\
          &       &       & 0.01  &       &$\mathbf{40463}$  & $\mathbf{40463}$ & $\mathbf{40463}$ &       & $\mathbf{40463}$ & 1587  &       & $-$   &    $-$     &       & 40348.77 & 1339.58 &       & $\mathbf{40463}$ & 55654.3928 \\
          &       &       & 0.1   &       &\cellcolor[gray]{.9} $\mathbf{40985}$ & \cellcolor[gray]{.9}{82.48s} &       &       & $\mathbf{40985}$  & 2100  &       &  $-$   &   $-$      &       & 40869.57 & 1345.31 &       & $\mathbf{40985}$  & 51387.5036 \\
          &       & 0.1   & 0.001 &       & 35650 & 35978 & 36305 &       & $\mathbf{36390}$ & 572   &       &     $-$    &    $-$     &       & 36317.6 & 1314.06 &       &  $\mathbf{36390}$  & 42624.4348 \\
          &       &       & 0.01  &       &  $\mathbf{39628}$  &  $\mathbf{39628}$ &  $\mathbf{39628}$ &       &  $\mathbf{39628}$ & 771   &       &    $-$     &   $-$      &       & 39587.9 & 1333.85 &       &  $\mathbf{39628}$ & 48509.0626 \\
          &       &       & 0.1   &       & 40668 & 40668 & 40668 &       & $\mathbf{40672}$  & 1047  &       &  $-$       &  $-$       &       & 40649.03 & 1342.19 &       & 40671.8 & 50373.7936 \\
          \bottomrule
\end{tabular}}}%
\label{tab:100betaUniform}%
\end{table}

\begin{table}[htbp]
  \centering
  \caption{Statistical results for the instance eil101 with 500 items based on multiplicative uniform distribution and Chebyshev's inequality}
  \scalebox{0.5}{
 \makebox[\linewidth][c]{
 \tabcolsep=0.1cm
    \begin{tabular}{crrrrrrrrrrlrrrl}
    \toprule
          & \multicolumn{1}{l}{Capacity} & \multicolumn{1}{l}{$\beta$} & \multicolumn{1}{l}{$\alpha$} &       & \multicolumn{3}{c}{ILP} &       & \multicolumn{3}{c}{(1+1)EA(1)} &       & \multicolumn{3}{c}{GSEMO(2)} \\
          &       &       &       &       & \multicolumn{1}{l}{2mins} & \multicolumn{1}{l}{6mins} & \multicolumn{1}{l}{10mins} &       & \multicolumn{1}{l}{profit} & \multicolumn{1}{l}{time(ms)} & stat  &       & \multicolumn{1}{l}{profits} & \multicolumn{1}{l}{time(ms)} & stat \\
          \midrule
   bou-s-c 1 & 61447 & 0.01  & 0.001 &       & $\mathbf{ 77951}$ &  $\mathbf{ 77951}$ &  $\mathbf{ 77951}$ &       & 76680.71 & 5941.67 & 2(-)  &       & 77364.63 & 38885.42 & 1(+) \\
          &       &       & 0.01  &       & 78649 & 78666 &  $\mathbf{ 78749}$ &       & 77757.42 & 5928.65 & 2(-)  &       & 78423.77 & 39448.44 & 1(+) \\
          &       &       & 0.1   &       &$\mathbf{ 79411}$ & $\mathbf{ 79411}$  & $\mathbf{ 79411}$  &       & 77937.93 & 5945.83 & 2(-)  &       & 78774.67 & 39529.69 & 1(+) \\
          &       & 0.05  & 0.001 &       & 71001 & 71825 & 71825 &       & 71273.63 & 5932.81 & 2(-)  &       & $\mathbf{71866.73}$  & 38789.06 & 1(+) \\
          &       &       & 0.01  &       & $\mathbf{76766 }$&  $\mathbf{76766 }$ &  $\mathbf{76766 }$ &       & 75786.83 & 5937.5 & 2(-)  &       & 76556.33 & 39010.94 & 1(+) \\
          &       &       & 0.1   &       &  $\mathbf{78751 }$ & $\mathbf{78751 }$ & $\mathbf{78751 }$ &       & 77472.63 & 5928.13 & 2(-)  &       & 78160.3 & 39192.19 & 1(+) \\
          &       & 0.1   & 0.001 &       & 44200 & 49668 & 49672 &       & 65449.73 & 5949.48 & 2(-)  &       &$\mathbf{66057.46 }$  & 37460.94 & 1(+) \\
          &       &       & 0.01  &       & 45378 & $\mathbf{74709}$ & $\mathbf{74709}$ &       & 73659.27 & 5943.23 & 2(-)  &       & 74308.9 & 38843.75 & 1(+) \\
          &       &       & 0.1   &       & 77951 & 77951 & $\mathbf{78011}$ &       & 76666.37 & 5928.13 & 2(-)  &       & 77456.63 & 39206.25 & 1(+) \\
          \hline
       bou-s-c 2   & 162943 & 0.01  & 0.001 &       & 188693 &$\mathbf{189540}$  & $\mathbf{189540}$ &       & 187857.09 & 6138.54 & 2(-)  &       & 187986.63 & 62129.69 & 1(+) \\
          &       &       & 0.01  &       &$\mathbf{191716}$  & $\mathbf{191716}$  & $\mathbf{191716}$  &       & 190031.37 & 6133.33 & 2(-)  &       & 190110.66 & 62969.79 & 1(+) \\
          &       &       & 0.1   &       & 192263 & 192329 & $\mathbf{192437}$   &       & 190653.74 & 6133.33 & 2(-)  &       & 190824.75 & 63540.1 & 1(+) \\
          &       & 0.05  & 0.001 &       & 175165 & 175870 & 175870 &       & 176974.32 & 6151.04 & 2(-)  &       & $\mathbf{177249.9}$  & 61518.04 & 1(+) \\
          &       &       & 0.01  &       & 187042 & $\mathbf{187614}$ & $\mathbf{187614}$ &       & 186179.49 & 6133.85 & 2(-)  &       & 186342.34 & 62348.44 & 1(+) \\
          &       &       & 0.1   &       & 191045 &$\mathbf{191098}$  & $\mathbf{191098}$  &       & 189401.84 & 6130.73 & 2(-)  &       & 189591.47 & 62891.67 & 1(+) \\
          &       & 0.1   & 0.001 &       & 120607 & 147427 & 155490 &       & 165606.16 & 6167.19 & 2(-)  &       & $\mathbf{165991.14 }$ & 57807.81 & 1(+) \\
          &       &       & 0.01  &       & 181967 &  $\mathbf{182660 }$&  $\mathbf{182660 }$ &       & 181746.03 & 6143.23 & 2(-)  &       & 181972.58 & 61254.69 & 1(+) \\
          &       &       & 0.1   &       & $\mathbf{189477 }$ & $\mathbf{189477 }$  & $\mathbf{189477 }$  &       & 188041.25 & 6126.56 & 2(-)  &       & 188168.17 & 62590.63 & 1(+) \\
          \hline
       bou-s-c 3   & 307286 & 0.01  & 0.001 &       & 343297 &  $\mathbf{343820 }$  & $\mathbf{343820 }$   &       & 341568.23 & 6425  & 2(-)  &       & 342432.33 & 85480.73 & 1(+) \\
          &       &       & 0.01  &       & 346271 & 347015 & $\mathbf{347145 }$    &       & 344745.61 & 6421.83 & 2(-)  &       & 344843.61 & 85917.19 & 1(+) \\
          &       &       & 0.1   &       & 347820 &  $\mathbf{348151}$ & $\mathbf{348151}$  &       & 345852.22 & 6406.79 & 2(-)  &       & 345952.21 & 86294.27 & 1(+) \\
          &       & 0.05  & 0.001 &       & 168902 & 168902 & 203124 &       & 324236.08 & 6435.94 & 2(-)  &       & $\mathbf{324561.56}$   & 81405.73 & 1(+) \\
          &       &       & 0.01  &       & 199323 & 248792 & 311808 &       & 338751.13 & 6425.52 & 2(-)  &       &$\mathbf{338830.56}$  & 85056.25 & 1(+) \\
          &       &       & 0.1   &       &$\mathbf{346271}$  & $\mathbf{346271}$  & $\mathbf{346271}$  &       & 343994.57 & 6425  & 2(-)  &       & 344849.08 & 86061.98 & 1(+) \\
          &       & 0.1   & 0.001 &       & 167882 & 231201 & 231201 &       & 305712.67 & 6461.98 & 2(-)  &       &$\mathbf{306298.87}$   & 76398.96 & 1(+) \\
          &       &       & 0.01  &       & 169122 & 254642 & 254642 &       & 331962.13 & 6431.77 & 2(-)  &       & $\mathbf{331991.33}$  & 82465.1 & 1(+) \\
          &       &       & 0.1   &       & 200994 & 341453 &$\mathbf{343392}$   &       & 341835.52 & 6425  & 2(-)  &       & 341914.53 & 84382.29 & 1(+) \\
          \hline
           uncorr 1 & 37686 & 0.01  & 0.001 &       &    87252   &     87333  &  $\mathbf{87465 }$    &       & 86527.83 & 5964.58 & 2(-)  &       & 87259.37 & 22572.92 & 1(+) \\
          &       &       & 0.01  &       &  88093     &  $\mathbf{88104 }$        &   $\mathbf{88104 }$       &       & 87167.43 & 5961.46 & 2(-)  &       & 87897.4 & 22578.13 & 1(+) \\
          &       &       & 0.1   &       &   $\mathbf{88311 }$       &      $\mathbf{88311 }$    &    $\mathbf{88311 }$      &       & 87314.5 & 5963.54 & 2(-)  &       & 88099.57 & 22747.4 & 1(+) \\
          &       & 0.05  & 0.001 &       &    82872   &  82872  &    82872   &       & 82862.63 & 5951.56 & 2(-)  &       & $\mathbf{83631.43}$    & 22251.56 & 1(+) \\
          &       &       & 0.01  &       &  $\mathbf{  86847}$   &  $\mathbf{  86847}$     &   $\mathbf{  86847}$     &       & 85984.17 & 5953.65 & 2(-)  &       & 86720.27 & 22438.54 & 1(+) \\
          &       &       & 0.1   &       &  16687     &    87904   & $\mathbf{  87926}$       &       & 87003.1 & 5959.38 & 2(-)  &       & 87759.4 & 22661.98 & 1(+) \\
          &       & 0.1   & 0.001 &       &  16590     & 58953      &    75707   &       & 78590.73 & 5934.38 & 2(-)  &       &  $\mathbf{79491.4}$  & 21914.06 & 1(+) \\
          &       &       & 0.01  &       &   84948    &  $\mathbf{85449}$     &    $\mathbf{85449}$      &       & 84462.23 & 5947.92 & 2(-)  &       & 85287.3 & 22295.83 & 1(+) \\
          &       &       & 0.1   &       &   16687    &    87253   &    $\mathbf{87489}$      &       & 86561.52 & 5955.21 & 2(-)  &       & 87314.33 & 22748.96 & 1(+) \\
          \hline
        uncorr 2  & 93559 & 0.01  & 0.001 &       &   101601    &  $\mathbf{148534}$       &   $\mathbf{148534}$      &       & 147858.42 & 6346.31 & 2(-)  &       & 147978.3 & 30017.71 & 1(+) \\
          &       &       & 0.01  &       &    37741   &   149398    &  $\mathbf{149403}$       &       & 148717.23 & 6253.65 & 2(-)  &       & 148800.6 & 30072.4 & 1(+) \\
          &       &       & 0.1   &       &    37741   &   149695    & $\mathbf{149696}$    &       & 149021.6 & 6248.44 & 2(-)  &       & 149097 & 29593.75 & 1(+) \\
          &       & 0.05  & 0.001 &       &   101527    &  $\mathbf{143671}$        &   $\mathbf{143671}$        &       & 143106.43 & 6220.31 & 2(-)  &       &  $\mathbf{143671}$   & 98357.81 & 1(+) \\
          &       &       & 0.01  &       &   101593    &   142271    &   $\mathbf{147820}$      &       & 147109.2 & 6248.44 & 2(-)  &       & 147676.7 & 99925.52 & 1(+) \\
          &       &       & 0.1   &       &     148834  &  148834  &  $\mathbf{149218}$        &       & 148500.83 & 6247.4 & 2(-)  &       & 149050.2 & 101393.23 & 1(+) \\
          &       & 0.1   & 0.001 &       &  37939     &    114416   &   114416    &       & 137665.33 & 6257.29 & 2(-)  &       & $\mathbf{137942.4}$  & 28286.98 & 1(+) \\
          &       &       & 0.01  &       &   101587    &    143082   &   $\mathbf{145893}$      &       & 145231.46 & 6267.71 & 2(-)  &       & 145816 & 99587.5 & 1(+) \\
          &       &       & 0.1   &       &  148605     &   $\mathbf{148613}$    &    $\mathbf{148613}$     &       & 147931.4 & 6655.73 & 2(-)  &       & 148450 & 100575.52 & 1(+) \\
          \hline
         uncorr 3 & 171819 & 0.01  & 0.001 &       &  69026     &   $\mathbf{208530}$      &     $\mathbf{208530}$    &       & 207914.33 & 6685.42 & 2(-)  &       & 207955.3 & 114927.6 & 1(+) \\
          &       &       & 0.01  &       &    69076   &  $\mathbf{209489}$       &    $\mathbf{209489}$    &       & 208724.56 & 6684.8 & 2(-)  &       & 208737 & 114892.71 & 1(+) \\
          &       &       & 0.1   &       &  69076     &   $\mathbf{209734}$     &    $\mathbf{209734}$    &       & 208963.6 & 6617.71 & 2(-)  &       & 208980.9 & 114864.58 & 1(+) \\
          &       & 0.05  & 0.001 &       &   68118    &    141171   &    156856   &       & 203217.33 & 6648.96 & 2(-)  &       & $\mathbf{203416}$   & 113179.69 & 1(+) \\
          &       &       & 0.01  &       &   129569    &    207365   &    $\mathbf{207985}$   &       & 207275.63 & 6617.71 & 2(-)  &       & 207315.8 & 114427.6 & 1(+) \\
          &       &       & 0.1   &       &    129871   &  209229     &     $\mathbf{209255}$  &       & 208519.46 & 6657.29 & 2(-)  &       & 208567 & 114638.02 & 1(+) \\
          &       & 0.1   & 0.001 &       &   66564    &  142764     &  154311     &       & 197404.69 & 6572.4 & 2(-)  &       &$\mathbf{197631.8 }$  & 111736.46 & 1(+) \\
          &       &       & 0.01  &       &   68197    &  204818     &     $\mathbf{206030}$   &       & 205424.63 & 6635.42 & 2(-)  &       & 205511.3 & 113712.5 & 1(+) \\
          &       &       & 0.1   &       &  115721     &   208706    &      $\mathbf{208706}$  &       & 208053.79 & 6643.23 & 2(-)  &       & 208019.7 & 114743.23 & 1(+) \\
          \bottomrule
    \end{tabular}}}%
  \label{tab:500betaUniform}%
\end{table}%

In the second category of instances, the weights of items have multiplicative uniform distributions and take values in the real interval $[(1-\beta)a_i,(1+\beta)a_i]$. Here, the uncertainty gap corresponding to each item is expressed as a percentage of the expected weight. In this setting, the distance of the random intervals between each item vary for each item pair. We apply Chebyshev's inequality to deal with the chance constraint in this category of instances. The results are listed in Table \ref{tab:100betaUniform} and Table \ref{tab:500betaUniform}, and we mark the best result across all approaches in each row in bold. The grey cubes in the table highlight the instances in which ILP can be completed in the computation time listed in the second columns under the title \textit{ILP}. It can be observed that all algorithms produce inferior solutions when the uncertainty added to the weights of items (measured by $\beta$) increases, or the upper bound of the probability of overloading the capacity in the form of $\alpha$ decreases. 

Furthermore, it can be observed from Table \ref{tab:100betaUniform}, we found that the results obtained using the heuristic approach are better than those obtained using EAs, and that results obtained through the heuristic approach are more significant in the case of \textit{uncorrelated} instances. However, the results for instances which have 500 items cannot be list in Table \ref{tab:500betaUniform}, since it takes more than two hours for the heuristic approach to obtain a result in such cases. Moreover, the results listed in the \textit{ILP} columns are better than those in the \textit{(1+1)~EA} columns, but it is difficult to make a clear comparison of these results with the results of \textit{GSEMO}. The computation time associated with ILP is longer than that associated with EAs for most of the instances. Besides, as shown in Table \ref{tab:500betaUniform} the performance of GSEMO is significantly better than that of (1+1)~EA for all instances. 

\begin{table}[t]
  \centering
  \caption{Statistic results for the instance eil101 with 100 items based on Normal distribution and Chebyshev's inequality}
  \scalebox{0.5}{
 \makebox[\linewidth][c]{
 \tabcolsep=0.1cm
    \begin{tabular}{crrrrrrrrrrrrrrrrrrr}
    \toprule
          & \multicolumn{1}{l}{Capacity} & \multicolumn{1}{l}{$\beta$} & \multicolumn{1}{l}{$\alpha$} &       & \multicolumn{3}{c}{ILP} &       & \multicolumn{2}{c}{Heuristic approach} &       & \multicolumn{2}{c}{DP} &       & \multicolumn{2}{c}{(1+1)EA} &       & \multicolumn{2}{c}{GSEMO} \\
          &       &       &       &       & \multicolumn{1}{l}{2mins} & 6mins & \multicolumn{1}{l}{10mins} &       & \multicolumn{1}{l}{profit} & \multicolumn{1}{l}{time(ms)} &       & \multicolumn{1}{l}{profit} & \multicolumn{1}{l}{time(ms)} &       & \multicolumn{1}{l}{profit} & \multicolumn{1}{l}{time(ms)} &       & \multicolumn{1}{l}{profits} & \multicolumn{1}{l}{time(ms)} \\
          \midrule
          bou-s-c 1 & 11775 & 0.01  & 0.001 &       & $\mathbf{15635}$ & $\mathbf{15635}$ & $\mathbf{15635}$ &       & $\mathbf{15635}$ & 1074  &       &    $-$     &     $-$    &       & 15481.8 & 1227.083 &       & $\mathbf{15635}$ & 122546.3 \\
          &       &       & 0.01  &       &\cellcolor[gray]{.9}$\mathbf{15946}$  &\cellcolor[gray]{.9} 59.92s &       &       & $\mathbf{15946}$  & 1836  &       &     $-$    & $-$        &       & 15759.4 & 1232.813 &       & $\mathbf{15946}$  & 119314.8 \\
          &       &       & 0.1   &       &\cellcolor[gray]{.9} $\mathbf{15946}$ &\cellcolor[gray]{.9} 59.51s &       &       & $\mathbf{15946}$ & 1211  &       &   $-$      &   $-$      &       & 15862.53 & 1223.958 &       & $\mathbf{15946}$ & 114167.2 \\
          &       & 0.05  & 0.001 &       & 15114 & 15114 & 15114 &       &$\mathbf{15128}$  & 1316  &       &    $-$     &  $-$       &       & 14971.97 & 1224.479 &       & $\mathbf{15128}$ & 95547.55 \\
          &       &       & 0.01  &       & 15732 & 15732 & 15732 &       & $\mathbf{15736}$   & 1360  &       &     $-$    &  $-$       &       & 15631.53 & 1222.917 &       & $\mathbf{15736}$ & 98784.17 \\
          &       &       & 0.1   &       & \cellcolor[gray]{.9} $\mathbf{15946}$  & \cellcolor[gray]{.9}16.88s &       &       & $\mathbf{15946} $ & 1546  &       &   $-$      &  $-$       &       & 15789.63 & 1225  &       & $ \mathbf{15946} $  & 113056.9 \\
          &       & 0.1   & 0.001 &       & 14719 & 14719 &   14719    &       &$\mathbf{14739}$  & 971   &       &   $-$      &     $-$    &       & 14631.63 & 1224.479 &       & $\mathbf{14739}$  & 100566.2 \\
          &       &       & 0.01  &       & 15630 & 15630 &  15630     &       &$\mathbf{15635}$  & 2152  &       &    $-$     & $-$        &       & 15480.13 & 1226.042 &       & $\mathbf{15635}$  & 110594.3 \\
          &       &       & 0.1   &       & \cellcolor[gray]{.9}$\mathbf{15946}$  & \cellcolor[gray]{.9}13.94s &       &       & $\mathbf{15946}$  & 1634  &       &  $-$       &   $-$      &       & 15758.67 & 1221.354 &       & $\mathbf{15946}$  & 112109.6 \\
          \hline
       bou-s-c 2   & 31027 & 0.01  & 0.001 &       & $\mathbf{37359}$ &     $\mathbf{37359}$   &   $\mathbf{37359}$     &       &  $\mathbf{37359}$ & 22126 &       &   $-$      &   $-$      &       & 37159.3 & 1271.35 &       & 37374.47 & 326773.7 \\
          &       &       & 0.01  &       & 37272 & 37272& 37751 &       & $\mathbf{37752}$ & 28381 &       &  $-$       &  $-$       &       & 37586.77 & 1265.625 &       & 37751.9 & 265563.1 \\
          &       &       & 0.1   &       &\cellcolor[gray]{.9} $\mathbf{37874 }$&   \cellcolor[gray]{.9}36.48s    &       &       & $\mathbf{37874 }$ & 27277 &       &   $-$      &  $-$       &       & 37703.33 & 1268.75 &       & $\mathbf{37874 }$ & 399498.2 \\
          &       & 0.05  & 0.001 &       & 36539 & 36539&  36539     &       & $\mathbf{36606  }$& 20569 &       &   $-$      &  $-$       &       & 36425.97 & 1269.271 &       & 36605.83 & 446766.4 \\
          &       &       & 0.01  &       & 37509 & 37515 & 37515 &       &$\mathbf{37537  }$  & 22290 &       &  $-$       &  $-$       &       & 37345.97 & 1274.479 &       & 37536.07 & 541687.4 \\
          &       &       & 0.1   &       & 37776 & 37791& 37791 &       & $\mathbf{37809  }$  & 27942 &       &   $-$      &   $-$      &       & 37622.23 & 1266.667 &       & 37808.97 & 523256 \\
          &       & 0.1   & 0.001 &       & 35982 & 36000 & 36000 &       & $\mathbf{36103  }$  & 35383 &       &  $-$       &   $-$      &       & 35857.93 & 1281.771 &       & 36100.5 & 336191.5 \\
          &       &       & 0.01  &       & 37272 & 37303 & 37303 &       & $\mathbf{37375}$& 32159 &       &      $-$   &   $-$      &       & 37163.57 & 1286.979 &       & 37374.5 & 292882 \\
          &       &       & 0.1   &       & 37752 & 37752 &    37752   &       &$\mathbf{37760}$   & 34877 &       &  $-$       & $-$        &       & 37587.67 & 1266.667 &       & $\mathbf{37760}$ & 166735.9 \\
          \hline
       bou-s-c 3   & 58455 & 0.01  & 0.001 &       & 66883 & 66883& 66889 &       & $\mathbf{66895}$  & 263962 &       &    $-$     &   $-$      &       & 66633 & 1310.93 &       & 66856.3 & 253407.8 \\
          &       &       & 0.01  &       & 67402 & 67402 & 67402 &       &$\mathbf{67414}$  & 307935 &       &   $-$     &   $-$      &       & 67214.13 & 1410.93 &       & 67412.17 & 252614.6 \\
          &       &       & 0.1   &       & 67581 &    67581   &   67581    &       &$\mathbf{67582}$   & 217621 &       &     $-$    &   $-$      &       & 67395.43 & 1314.583 &       & 67580.73 & 250042.7 \\
          &       & 0.05  & 0.001 &       & 65863 & 65865&     65865  &       &$\mathbf{65871}$   & 178900 &       &     $-$    &    $-$     &       & 65674.87 & 1324.479 &       & 65868.4 & 244173.4 \\
          &       &       & 0.01  &       & 67067 & 67085 & 67085 &       & $\mathbf{67119}$  & 226313 &       &   $-$      &  $-$       &       & 66903.47 & 1328.125 &       & 67104.73 & 250305.2 \\
          &       &       & 0.1   &       & 67492 & 67492& 67492 &       & $\mathbf{67493}$  & 226818 &       &   $-$      &   $-$      &       & 67284.2 & 1314.583 &       & 67491.37 & 250183.9 \\
          &       & 0.1   & 0.001 &       & 65181 &$\mathbf{65187}$  & $\mathbf{65187}$ &       & $\mathbf{65187}$ & 193779 &       &  $-$       &  $-$       &       & 64911.7 & 1317.188 &       & 65102.6 & 241117.2 \\
          &       &       & 0.01  &       & 66883 & 66883 & 66883 &       & $\mathbf{66899}$ & 230572 &       & $-$        &   $-$      &       & 66675.67 & 1311.458 &       & 66851.1 & 249634.4 \\
          &       &       & 0.1   &       & 67425 & 67425 & 67425 &       &$\mathbf{67426}$  & 215087 &       &     $-$    &  $-$       &       & 67230.03 & 1327.083 &       & 67423.43 & 246294.3 \\
          \hline
    uncorr 1 & 7715  & 0.01  & 0.001 &        & \cellcolor[gray]{.9} $\mathbf{17151}$  &\cellcolor[gray]{.9} 155.74s & &      & $\mathbf{17151}$ & 170   &       & $\mathbf{17151}$ & 62462460 &       & 17099.73 & 1230.208 &       & $\mathbf{17151}$ & 7984.781 \\
          &       &       & 0.01  &       &\cellcolor[gray]{.9}  $\mathbf{17366 }$ & \cellcolor[gray]{.9}5.22s &       &       & $\mathbf{17366 }$ & 153   &       & $\mathbf{17366 }$ & 74021365 &       & 17333.5 & 1243.75 &       & $\mathbf{17366 }$ & 7432.983 \\
          &       &       & 0.1   &       & \cellcolor[gray]{.9}$\mathbf{17499}$   & \cellcolor[gray]{.9} 1.92s &       &       & $\mathbf{17499}$ & 342   &       & $\mathbf{17499}$ & 81548280 &       & 17445.2 & 1322.396 &       & $\mathbf{17499}$ & 6449.72 \\
          &       & 0.05  & 0.001 &       & $\mathbf{16638}$  & $\mathbf{16638}$ &   $\mathbf{16638}$    &       & $\mathbf{16638}$ & 127   &       & $\mathbf{16638}$ & 46226151 &       & 16571.3 & 1227.604 &       & $\mathbf{16638}$ & 8861.319 \\
          &       &       & 0.01  &       & \cellcolor[gray]{.9} $\mathbf{17247}$ & \cellcolor[gray]{.9} 15.46s &       &       & $\mathbf{17247}$ & 109   &       & $\mathbf{17247}$ & 66880112 &       & 17167.67 & 1229.167 &       & $\mathbf{17247}$ & 6220.258 \\
          &       &       & 0.1   &       &  \cellcolor[gray]{.9} $\mathbf{17408}$ &  \cellcolor[gray]{.9} 3.77s &       &       & $\mathbf{17408}$ & 164   &       & $\mathbf{17408}$ & 71001731 &       & 17373.7 & 1199.688 &       & $\mathbf{17408}$ & 7471.837 \\
          &       & 0.1   & 0.001 &       &\cellcolor[gray]{.9} $\mathbf{16266}$  &   \cellcolor[gray]{.9} 1.23s    &       &       & $\mathbf{16266}$ & 80    &       & $\mathbf{16266}$ & 44301629 &       & 16177.27 & 1229.167 &       & $\mathbf{16266}$ & 6806.991 \\
          &       &       & 0.01  &       &\cellcolor[gray]{.9}$\mathbf{17151}$   & \cellcolor[gray]{.9} 58.59s &       &       & $\mathbf{17151}$ & 672   &       & $\mathbf{17151}$ & 50929605 &       & 17090.43 & 1247.604 &       & $\mathbf{17151}$ & 6305.284 \\
          &       &       & 0.1   &       & \cellcolor[gray]{.9} $\mathbf{17366}$ & \cellcolor[gray]{.9}3.86s &       &       & $\mathbf{17366}$ & 688   &       & $\mathbf{17366}$ & 66415177 &       & 17338.23 & 1242.083 &       & $\mathbf{17366}$ & 8335.682 \\
          \hline
       uncorr 2   & 19545 & 0.01  & 0.001 &       & 29248 & 29248 & 29248 &       &$\mathbf{29250}$   & 610   &       &  $-$       &  $-$       &       & 29159.13 & 1276.042 &       & $\mathbf{29250}$ & 20153.41 \\
          &       &       & 0.01  &       &  \cellcolor[gray]{.9} $\mathbf{29508}$ &  \cellcolor[gray]{.9} 9.98s &       &       & $\mathbf{29508}$ & 658   &       & $-$        &  $-$       &       & 29401.57 & 1166.042 &       & $\mathbf{29508}$ & 19399.76 \\
          &       &       & 0.1   &       &\cellcolor[gray]{.9} $\mathbf{29595}$ & \cellcolor[gray]{.9} 2.87s &       &       & $\mathbf{29595}$  & 586   &       &   $-$      &   $-$      &       & 29512.8 & 1277.083 &       & $\mathbf{29595}$  & 20561.9 \\
          &       & 0.05  & 0.001 &       &\cellcolor[gray]{.9}$\mathbf{28766}$  &  \cellcolor[gray]{.9}2.45s     &       &       & $\mathbf{28766}$ & 691   &       &    $-$     &   $-$      &       & 28312.4 & 1276.563 &       & $\mathbf{28766}$ & 19775.83 \\
          &       &       & 0.01  &       &\cellcolor[gray]{.9} $\mathbf{29415}$  &   \cellcolor[gray]{.9} 5.78s    &       &       & $\mathbf{29415}$ & 584   &       &  $-$       &   $-$      &       & 29159.7 & 1278.125 &       & $\mathbf{29415}$ & 21976.75 \\
          &       &       & 0.1   &       & \cellcolor[gray]{.9} $\mathbf{29585}$  & \cellcolor[gray]{.9}4.6s  &       &       & $\mathbf{29585}$ & 1048  &       &  $-$       &   $-$      &       & 29446.37 & 1273.438 &       & $\mathbf{29585}$ & 20835.11 \\
          &       & 0.1   & 0.001 &       &\cellcolor[gray]{.9} $\mathbf{28472}$  &   \cellcolor[gray]{.9} 5.18s    &       &       & $\mathbf{28472}$ & 1050  &       &    $-$     &     $-$    &       & 28379.33 & 1273.958 &       & $\mathbf{28472}$ & 23583.94 \\
          &       &       & 0.01  &       & 29248 & 29248 &   29248    &       &  $\mathbf{29250}$  & 516   &       &    $-$     &   $-$      &       & 29117.5 & 1276.563 &       &  $\mathbf{29250}$  & 23508.19 \\
          &       &       & 0.1   &       & \cellcolor[gray]{.9}  $\mathbf{29508}$  &  \cellcolor[gray]{.9} 8.01s &       &       & $\mathbf{29508}$ & 581   &       &   $-$      &    $-$     &       & 29505.1 & 1276.042 &       & $\mathbf{29508}$ & 18264.97 \\
          \hline
        uncorr 3  & 36091 & 0.01  & 0.001 &       & 40858 &    105.48s   &       &       & 40858 & 1190  &       &    $-$     &   $-$      &       & 40785.7 & 1448.958 &       & 40858 & 59528.52 \\
          &       &       & 0.01  &       &\cellcolor[gray]{.9}  $\mathbf{41084}$  & \cellcolor[gray]{.9} 2.75s &       &       & $\mathbf{41084}$   & 881   &       &$-$         &     $-$    &       & 41022.8 & 1348.958 &       & $\mathbf{41084}$ & 70468.21 \\
          &       &       & 0.1   &       & \cellcolor[gray]{.9} $\mathbf{41121}$  & \cellcolor[gray]{.9} 4.3s  &       &       & $\mathbf{41121}$  & 1721  &       &   $-$      &  $-$       &       & 41069.2 & 1346.875 &       & 41120.53 & 64887.42 \\
          &       & 0.05  & 0.001 &       & $\mathbf{40482}$   & $\mathbf{40482}$ &  $\mathbf{40482}$     &       & $\mathbf{40482}$ & 2445  &       &   $-$      &     $-$    &       & 40124.27 & 1342.188 &       & $\mathbf{40482}$ & 67218.26 \\
          &       &       & 0.01  &       & \cellcolor[gray]{.9}$\mathbf{40985}$   & \cellcolor[gray]{.9} 58.57s &       &       & $\mathbf{40985}$  & 1562  &       &    $-$     &  $-$       &       & 40778.7 & 1346.354 &       & $\mathbf{40985}$  & 68263.79 \\
          &       &       & 0.1   &       & \cellcolor[gray]{.9}$\mathbf{41090}$  & \cellcolor[gray]{.9}4.91s &       &       & $\mathbf{41090}$ & 1121  &       &    $-$     &  $-$       &       & 40995.8 & 1348.958 &       & $\mathbf{41090}$ & 60929.71 \\
          &       & 0.1   & 0.001 &       & $\mathbf{40150}$ & $\mathbf{40150}$  & $\mathbf{40150}$  &       & $\mathbf{40150}$  & 1597  &       & $-$        &    $-$     &       & 40096.6 & 1341.667 &       & 40148.83 & 57693.78 \\
          &       &       & 0.01  &       &$\mathbf{40858}$  & $\mathbf{40858}$  & $\mathbf{40858}$ &       & $\mathbf{40858}$ & 1422  &       &  $-$       &   $-$      &       & 40845.53 & 1346.875 &       & $\mathbf{40858}$ & 58638.7 \\
          &       &       & 0.1   &       &\cellcolor[gray]{.9}  $\mathbf{41084}$ & \cellcolor[gray]{.9} 5.77s &       &       & $\mathbf{41084}$ & 1305  &       &  $-$       &  $-$       &       & 41059.5 & 1347.917 &       & $\mathbf{41084}$ & 58810.93 \\
          \bottomrule
    \end{tabular}}}%
  \label{tab:100Normal}%
\end{table}%

\begin{table}[t]
   \centering
  \caption{Statistic results for the instance eil101 with 500 items based on Normal distribution and Chebyshev's inequality}
  \scalebox{0.5}{
 \makebox[\linewidth][c]{
 \tabcolsep=0.1cm
    \begin{tabular}{crrrrrrrrrrlrrrl}
    \toprule
          & \multicolumn{1}{l}{Capacity} & \multicolumn{1}{l}{$\beta$} & \multicolumn{1}{l}{$\alpha$} &       & \multicolumn{3}{c}{ILP} &       & \multicolumn{3}{c}{(1+1)EA (1)} &       & \multicolumn{3}{c}{GSEMO (2)} \\
          &       &       &       &       & \multicolumn{1}{l}{2mins} & \multicolumn{1}{l}{6mins} & \multicolumn{1}{l}{10mins} &       & \multicolumn{1}{l}{profit} & \multicolumn{1}{l}{time(ms)} & stat  &       & \multicolumn{1}{l}{profits} & \multicolumn{1}{l}{time(ms)} & stat \\
          \midrule
           bou-s-c 1 & 61447 & 0.01  & 0.001 &       & $\mathbf{78649}$ & $\mathbf{78649}$ & $\mathbf{78649}$ &       & 75578.23 & 10723.1 & 2(-)  &       & 77967.43 & 64191.74 & 1(+) \\
          &       &       & 0.01  &       & 78649 & $\mathbf{78991}$ & $\mathbf{78991}$ &       & 77276.7 & 11453.57 & 2(-)  &       & 78614.93 & 65536.54 & 1(+) \\
          &       &       & 0.1   &       & $\mathbf{79471}$ & $\mathbf{79471}$ & $\mathbf{79471}$ &       & 77930.87 & 12345.55 & 2(-)  &       & 78775.83 & 65576.01 & 1(+) \\
          &       & 0.05  & 0.001 &       & $\mathbf{77515}$ & $\mathbf{77515}$ & $\mathbf{77515}$ &       & 66902.87 & 11525.26 & 2(-)  &       & 76868.6 & 64167.35 & 1(+) \\
          &       &       & 0.01  &       & 78649 & 78823 & $\mathbf{78897}$ &       & 74202.93 & 11487.28 & 2(-)  &       & 78222.2 & 64482.46 & 1(+) \\
          &       &       & 0.1   &       & 79095 & 79275 & $\mathbf{79280}$ &       & 76840.73 & 8534.408 & 2(-)  &       & 78664.87 & 65810.88 & 1(+) \\
          &       & 0.1   & 0.001 &       & 45784 & 73809 & $\mathbf{76045}$ &       & 58576.47 & 9455.142 & 2(-)  &       & 76069.83 & 64195.89 & 1(+) \\
          &       &       & 0.01  &       & $\mathbf{78649}$ & $\mathbf{78649}$ & $\mathbf{78649}$ &       & 70790.93 & 11302.55 & 2(-)  &       & 77959.23 & 64669.14 & 1(+) \\
          &       &       & 0.1   &       & 79024 & 79024 & $\mathbf{79108}$ &       & 75677.07 & 14396.65 & 2(-)  &       & 78590.93 & 65864.06 & 1(+) \\
          \hline
      bou-s-c 2    & 162943 & 0.01  & 0.001 &       & 190595 & 190630 & $\mathbf{190992}$ &       & 185699.2 & 14038.99 & 2(-)  &       & 189585.9 & 106708.7 & 1(+) \\
          &       &       & 0.01  &       & $\mathbf{192267}$ & $\mathbf{192267}$ & $\mathbf{192267}$ &       & 189245.8 & 13191.94 & 2(-)  &       & 190586.1 & 107000.5 & 1(+) \\
          &       &       & 0.1   &       & 192091 & 192445 & $\mathbf{192587}$ &       & 190476.9 & 12174.11 & 2(-)  &       & 190861.6 & 109597.3 & 1(+) \\
          &       & 0.05  & 0.001 &       & $\mathbf{189477}$ & $\mathbf{189477}$ & $\mathbf{189477}$ &       & 168475.9 & 12581.75 & 2(-)  &       & 187893.3 & 106447.4 & 1(+) \\
          &       &       & 0.01  &       & $\mathbf{191739}$ & $\mathbf{191739}$ & $\mathbf{191739}$ &       & 182869 & 12546.28 & 2(-)  &       & 190051.8 & 107138.7 & 1(+) \\
          &       &       & 0.1   &       & 192200 & 192340 & $\mathbf{192385}$ &       & 188430.7 & 12950.24 & 2(-)  &       & 190718.1 & 109228.8 & 1(+) \\
          &       & 0.1   & 0.001 &       & 187042 & $\mathbf{187826}$ & $\mathbf{187826}$ &       & 151777.8 & 12721.25 & 2(-)  &       & 186650.7 & 198824.9 & 1(+) \\
          &       &       & 0.01  &       & 190595 & 190595 & $\mathbf{190606}$ &       & 175823.3 & 12746.82 & 2(-)  &       & 189036.6 & 153722.1 & 1(+) \\
          &       &       & 0.1   &       & $\mathbf{192252}$ & $\mathbf{192252}$ & $\mathbf{192252}$ &       & 185978 & 12273.07 & 2(-)  &       & 190568.9 & 109722.5 & 1(+) \\
          \hline
       bou-s-c 3   & 307286 & 0.01  & 0.001 &       & 346237 & $\mathbf{346560}$ & $\mathbf{346560}$ &       & 338244.3 & 12581.56 & 2(-)  &       & 343847.3 & 258392 & 1(+) \\
          &       &       & 0.01  &       & 347893 & $\mathbf{348099}$ & $\mathbf{348099}$ &       & 343735.8 & 12276.5 & 2(-)  &       & 345168.6 & 177581.7 & 1(+) \\
          &       &       & 0.1   &       & 348242 & 348411 & $\mathbf{348417}$ &       & 345411 & 12445.5 & 2(-)  &       & 345577.9 & 176647.7 & 1(+) \\
          &       & 0.05  & 0.001 &       & 200259 & 343015 & $\mathbf{344569}$ &       & 310428.7 & 12381.02 & 2(-)  &       & 341663.5 & 161079.6 & 1(+) \\
          &       &       & 0.01  &       & 346222 & $\mathbf{346473}$ & $\mathbf{346473}$ &       & 333825.5 & 12646.84 & 2(-)  &       & 344489.5 & 156963.6 & 1(+) \\
          &       &       & 0.1   &       & 347997 & $\mathbf{348128}$ & $\mathbf{348212}$ &       & 342423.9 & 8960.956 & 2(-)  &       & 345352.1 & 152797.6 & 1(+) \\
          &       & 0.1   & 0.001 &       & 202187 & $\mathbf{341627}$ & $\mathbf{341627}$ &       & 283131.2 & 12028.49 & 2(-)  &       & 339939.8 & 301736 & 1(+) \\
          &       &       & 0.01  &       & 346271 & $\mathbf{346767}$ & $\mathbf{346767}$ &       & 322537.6 & 12357.88 & 2(-)  &       & 343863.1 & 156330.3 & 1(+) \\
          &       &       & 0.1   &       & $\mathbf{348020}$ & $\mathbf{348020}$ & $\mathbf{348020}$ &       & 338710.1 & 12106.73 & 2(-)  &       & 345162 & 156225.4 & 1(+) \\
          \hline
   uncorr 1 & 37686 & 0.01  & 0.001 &       & 87489 & $\mathbf{87515}$ & $\mathbf{87515}$ &       & 85923.03 & 10130.34 & 2(-)  &       & 87311.57 & 49110.62 & 1(+) \\
          &       &       & 0.01  &       & $\mathbf{88169}$ & $\mathbf{88169}$ & $\mathbf{88169}$ &       & 86916.9 & 11220.51 & 2(-)  &       & 87887.03 & 55903.9 & 1(+) \\
          &       &       & 0.1   &       & $\mathbf{88331}$ & $\mathbf{88331}$ & $\mathbf{88331}$ &       & 87354.8 & 11674.73 & 2(-)  &       & 88078.5 & 23058.93 & 1(+) \\
          &       & 0.05  & 0.001 &       & $\mathbf{86431}$ & $\mathbf{86431}$ & $\mathbf{86431}$ &       & 79797.07 & 11502.3 & 2(-)  &       & 86271.57 & 53259.75 & 1(+) \\
          &       &       & 0.01  &       & 87252 & 87729 & $\mathbf{87765}$ &       & 84955.23 & 11715.79 & 2(-)  &       & 87578.73 & 51830.28 & 1(+) \\
          &       &       & 0.1   &       & 88169 & $\mathbf{88236}$ & $\mathbf{88236}$ &       & 86697.47 & 12063.41 & 2(-)  &       & 87977.4 & 23219.8 & 1(+) \\
          &       & 0.1   & 0.001 &       & 84636 & $\mathbf{85738}$ & $\mathbf{85738}$ &       & 73279.37 & 13354.03 & 2(-)  &       & 85481.67 & 30391.65 & 1(+) \\
          &       &       & 0.01  &       & 16687 & $\mathbf{87513}$ & $\mathbf{87513}$ &       & 82380.23 & 12894.84 & 2(-)  &       & 87314.6 & 30685.76 & 1(+) \\
          &       &       & 0.1   &       & $\mathbf{88169}$ & $\mathbf{88169}$ & $\mathbf{88169}$ &       & 85934.5 & 11955.7 & 2(-)  &       & 87913.5 & 26798.67 & 1(+) \\
          \hline
       uncorr 2   & 93559 & 0.01  & 0.001 &       & $\mathbf{148834}$ & $\mathbf{148834}$ & $\mathbf{148834}$ &       & 146900.1 & 12483.14 & 2(-)  &       & 148232.8 & 42380.86 & 1(+) \\
          &       &       & 0.01  &       & 149445 & $\mathbf{149516}$ & $\mathbf{149516}$ &       & 148429.4 & 12386.6 & 2(-)  &       & 148769.8 & 42206.33 & 1(+) \\
          &       &       & 0.1   &       & 149726 & $\mathbf{149730}$ & $\mathbf{149730}$ &       & 148869 & 12523.58 & 2(-)  &       & 148951.5 & 38688.04 & 1(+) \\
          &       & 0.05  & 0.001 &       & 37741 & 147694 & $\mathbf{147780}$ &       & 139095.4 & 12240.31 & 2(-)  &       & 147102.9 & 41993.5 & 1(+) \\
          &       &       & 0.01  &       & 148834 & $\mathbf{148934}$ & $\mathbf{148934}$ &       & 145768.8 & 11206.09 & 2(-)  &       & 148419.6 & 42218.72 & 1(+) \\
          &       &       & 0.1   &       & 148834 & $\mathbf{149625}$ & $\mathbf{149625} $&       & 148125.6 & 14307.55 & 2(-)  &       & 148839.1 & 39514.93 & 1(+) \\
          &       & 0.1   & 0.001 &       & 101587 & 146281 & $\mathbf{146888} $&       & 130418.7 & 13508.59 & 2(-)  &       & 146250.9 & 41917.14 & 1(+) \\
          &       &       & 0.01  &       & 37741 & $\mathbf{148834} $& $\mathbf{148834} $&       & 142598.9 & 11929.05 & 2(-)  &       & 148163.2 & 41993.42 & 1(+) \\
          &       &       & 0.1   &       & 149398 & $\mathbf{149543} $& $\mathbf{149543} $&       & 147111.4 & 12399.31 & 2(-)  &       & 148791 & 39720.63 & 1(+) \\
          \hline
      uncorr 3    & 171819 & 0.01  & 0.001 &       & 208811 & $\mathbf{209074} $& $\mathbf{209074} $&       & 207151.1 & 12162.4 & 2(-)  &       & 206872.6 & 49345.98 & 1(+) \\
          &       &       & 0.01  &       & 209630 & $\mathbf{209630} $ & $\mathbf{209630} $ &       & 208489.4 & 13234.64 & 2(-)  &       & 207338.3 & 49400.53 & 1(+) \\
          &       &       & 0.1   &       & 209790 & $\mathbf{209790}$ & $\mathbf{209790} $&       & 208946.9 & 14540.8 & 2(-)  &       & 207487.1 & 49227.39 & 1(+) \\
          &       & 0.05  & 0.001 &       & 141224 & 208033 & $\mathbf{208095} $&       & 199039.4 & 12712.34 & 2(-)  &       & 205892.9 & 49377.69 & 1(+) \\
          &       &       & 0.01  &       & 209229 & $\mathbf{209289} $& $\mathbf{209289}$ &       & 205913.6 & 12785.77 & 2(-)  &       & 206973 & 49178.8 & 1(+) \\
          &       &       & 0.1   &       & 209663 & $\mathbf{209694}$ & $\mathbf{209694} $ &       & 208144.3 & 12656.35 & 2(-)  &       & 207608.4 & 49367.14 & 1(+) \\
          &       & 0.1   & 0.001 &       & 141356 & 144160 & $\mathbf{207294}$ &       & 188980.9 & 12458.81 & 2(-)  &       & 205302.1 & 48909.96 & 1(+) \\
          &       &       & 0.01  &       & 141561 & $\mathbf{209047}$ & $\mathbf{209047}$ &       & 202708.7 & 13160.75 & 2(-)  &       & 206906.2 & 49180.67 & 1(+) \\
          &       &       & 0.1   &       & 209630 & $\mathbf{209630}$ & $\mathbf{209630}$ &       & 207203.3 & 12552.59 & 2(-)  &       & 207316.4 & 49094.54 & 1(+) \\
          \bottomrule
    \end{tabular}}}%
  \label{tab:500Normal}%
\end{table}%

In the last category of instance, we consider instances for which the weights of items conform to Normal distributions with known expectations and variances of weights. In all instances, the variances of weights are expressed as a percentage of the expected weights: $\sigma_i^2 =a_i * \beta$. In this arrangement, according to the theorem of the Chernoff bound, the upper bound of the chance constraint cannot be calculated using the Chernoff bound, and we only apply Chebyshev's inequality. The results are listed in Table \ref{tab:100Normal} and Table \ref{tab:500Normal}, and we mark the best results across all approaches in each row in bold.

The \textit{profit} column lists the mean value for 30 runs, and the  \textit{stat} column provides statistical test results baes on the comparison of the performances of (1+1)~EA and GSEMO. It can be observed from the tables that algorithms produce inferior solutions when the value of $\beta$ increases or the value of $\alpha$ increases. The results show that the heuristic approach outperforms the other algorithms for instances with 100 items, but takes more than two hours for instances with 500 items. Moreover, under the condition in which algorithms are run at similar computation times, the results solved by GSEMO are better than those solved using ILP. Furthermore, the performance of GSEMO is significantly better than that of (1+1)~EA.

By investigating the results lists in all tables, we found when limited the computation time into decades seconds, the GSEMO can provide high quality solutions overall. Moreover, in an acceptable running time (minutes),Heuristic approach performs better than other approaches in small size instances, but can't obtain a solution within two hours for large size instances. However, in this paper, (1+1)~EA and GSEMO are the simple evolutionary algorithms, in future studies, it might be possible to investigate the performance of different evolutionary algorithms on solving the chance-constrained knapsack problem.

\section{Conclusion}
\label{sec:conclusion}
Chance-constrained optimization problems play a crucial role in various real-world applications as they allow to limit the probability of violating a given constraint when dealing with stochastic problems. We have considered the chance-constrained knapsack problem and shown how to incorporate well-known probability tail inequalities into the search process of an evolutionary algorithm. Furthermore, we introduced deterministic approaches and compared them to the designed evolutionary algorithms. For the deterministic approaches, the disadvantage of them is the computation time when dealing with the chance-constrained knapsack problem. Our experimental results show that when solving large-size instances, DP can not finish in 10 hours, The heuristic approach can not finish in two hours while EAs can obtain a good quality solution in minutes. Even for small-size instances, GSEMO can get similar quality solutions as ILP and the heuristic approach but takes less computation time in most instances. Furthermore, our investigations point out under what circumstances Chebyshev's inequality or Chernoff bounds are favored as part of the fitness evaluation. The Chernoff bound is preferable in cases where the probability of violating the capacity of knapsack is small. We also have shown that using a multi-objective approach when dealing with the chance-constrained knapsack problem provides a clear benefit compared to its single-objective formulation for all kinds of investigated instance classes.

\section{Acknowledgements}
This work has been supported by the Australian Research Council through grant DP160102401 and by the South Australian Government through the Research Consortium "Unlocking Complex Resources through Lean Processing".

\bibliographystyle{abbrv}
\bibliography{main}
\end{document}